\newcommand{\eat}[1]{}
\newcommand{\etal}{\text{et~al.}\xspace}
\algnewcommand{\LineComment}[1]{\State $\triangleright$ #1}
\theoremstyle{definition}
\newtheorem{definition}{Definition}[section]
\newtheorem{theorem}{Theorem}
\newtheorem{lemma}{Lemma}
\newcommand{\dvnebold}{\textbf{DVNE}}
\theoremstyle{remark}
\newtheorem*{remark}{Remark}
\newtheorem{property}{Property}
\begin{document}
\title{Deep Learning for Learning Graph Representations}
\author{
	Wenwu Zhu,~~~~~~~Xin Wang,~~~~~~~Peng Cui \\
	Tsinghua University \\
	\{wwzhu,xin\_wang,cuip\}@tsinghua.edu.cn
}
%

%
\maketitle

\begin{abstract}
Mining graph data has become a popular research topic in computer science and has been widely studied in both academia and industry
given the increasing amount of network data in the recent years.
However, the huge amount of network data has posed great challenges for efficient analysis.
This motivates the advent of graph representation which maps the graph into a low-dimension vector space, keeping
original graph structure and supporting graph inference.
The investigation on efficient representation of a graph has profound theoretical significance and important realistic meaning, 
we therefore introduce some basic ideas in graph representation/network embedding as well as some representative models in this chapter.

\end{abstract}

{\bf Keywords: } Deep Learning, Graph Representation, Network Embedding

\section{Introduction}
Many real-world systems, such as Facebook/Twitter social systems, DBLP author-citation systems and roadmap transportation systems etc., 
can be formulated in the form of graphs or networks, making analyzing these systems equivalent to mining their corresponding graphs or networks.
Literature on mining graphs or networks has two names: graph representation
and network embedding. 
We remark that {\it graph} and {\it network} all refer to the same kind of structure, although each of them may have its own terminology,
e.g., a {\it vertice} and an {\it edge} in a graph v.s. a {\it node} and a {\it link} in a network.
Therefore we will exchangeably use graph representation and network embedding without further explanations in the remainder of this chapter.
The core of mining graphs/networks relies heavily on properly representing graphs/networks, making representation learning on graphs/networks
a fundamental research problem in both academia and industry. Traditional representation approaches represent graphs directly based on
their topologies, resulting in many issues including sparseness, high computational complexities etc., which actuates the advent of machine learning
based methods that explore the latent representations capable of capturing extra information in addition to topological structures for graphs in vector space. 
As such, the ability to find ``good'' latent representations for graphs plays an important role in accurate graph representations.
However, learning network representations faces challenges as follows: 
\begin{enumerate}
	\item \textbf{High non-linearity}. As is claimed by Luo~\etal~\cite{luo2011cauchy}, the network has highly non-linear underlying structure. 
	Accordingly, it is a rather challenging work to design a proper model to capture the \textit{highly non-linear} structure.
	\item \textbf{Structure-preserving}. With the aim of supporting network analysis applications, preserving the network structure is required for network embedding. However, the underlying structure of the network is quite \textit{complex} \cite{shaw2009structure}. In that the similarity of vertexes depends on both the local and global network structure, it is a tough problem to preserve the local and global structure simultaneously.
	\item \textbf{Property-preserving}. Real-world networks are normally very complex, their formation and evolution are accompanied with various properties such as uncertainties and dynamics. Capturing these properties in graph representation is of significant importance and poses great challenges. 
	\item \textbf{Sparsity}. Real-world networks are often too \textit{sparse} to provide adequate observed links for utilization, consequently causing unsatisfactory performances \cite{perozzi2014deepwalk}.
\end{enumerate}

Many network embedding methods have been put forward, which adopt shallow models like IsoMAP \cite{tenenbaum2000global}, Laplacian Eigenmaps (LE) \cite{belkin2003laplacian} and Line \cite{tang2015line}. However, on account of the limited representation ability \cite{bengio2009learning}, it is challenging for them to capture the highly nonlinear structure of the networks\cite{tian2014learning}. As \cite{zhuang2011two} stated, although some methods adopt kernel techniques \cite{vishwanathan2010graph}, they still belong to shallow models, incapable of capturing the highly non-linear structure well.
On the other hand, the success of deep learning in handling non-linearity brings us great opportunities for accurate representations in latent vector space. 
One natural question is that can we utilize deep learning to boost the performance of graph representation learning? The answer is yes, and we will
discuss some recent advances in combining deep learning techniques with graph representation learning in this chapter. 
Our discussions fall in two categories of approaches: deep structure-oriented approaches and deep property-oriented approaches. 
For structure-oriented approaches, we include three methods as follows.
\begin{itemize}
\item Structural deep network embedding (SDNE)~\cite{wang2016structural} that focuses on preserving high order proximity.
\item Deep recursive network embedding (DRNE)~\cite{tu2018deep} that focuses on preserving global structure.
\item Deep hyper-network embedding (DHNE)~\cite{tu2018structural} that focuses on preserving hyper structure.
\end{itemize}
For property-oriented approaches, we discuss:
\begin{itemize}
\item Deep variational network embedding (DVNE)~\cite{zhu2018deep} that focuses on the uncertainty property.
\item Deeply transformed high-order Laplacian Gaussian process (DepthLGP) based network embedding~\cite{ma2018depthlgp} 
that focuses on the dynamic (i.e., out-of-sample) property.
\end{itemize}


\section*{\it{\LARGE{Deep Structure-oriented Methods}}}

\section{High Order Proximity Preserving Network Embedding}
\label{sec:SDNE_introduction}

Deep learning, as a powerful tool capable of learning complex structures of the data \cite{bengio2009learning} through efficient representation, 
has been widely adopted to tackle a large number of tasks related to image~\cite{krizhevsky2012imagenet}, 
audio~\cite{hinton2012deep} and text \cite{socher2013recursive} etc. 
To preserve the high order proximity as well as capture the \textbf{highly non-linear} structure, Wang~\etal~\cite{wang2016structural} propose to learn vertex representations for networks by resorting to autoencoder, motivated by the recent success of deep neural networks. 
Concretely, the authors design a multi-layer architecture containing multiple non-linear functions, which 
maps the data into a highly non-linear latent space, thus is able to capture the highly non-linear network structure.

\begin{figure}[htb!]
\centering
\includegraphics[width=0.7\textwidth]{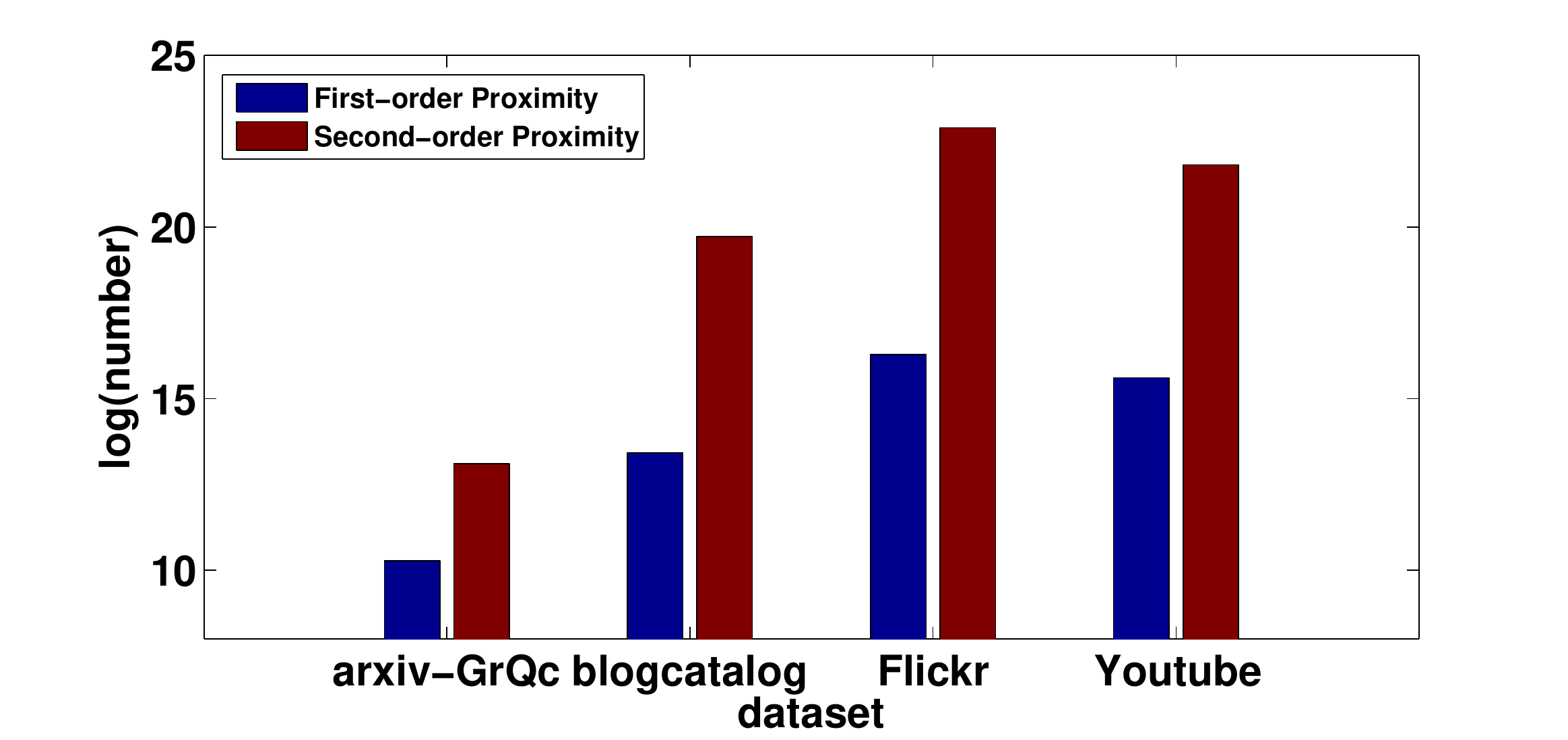}
\caption{ The number of pairs of vertexes which have first-order and second-order proximity in different datasets, figure from~\cite{wang2016structural}.}
\label{pic:res_number}
\end{figure}

So as to resolve the \textbf{structure-preserving} and \textbf{sparsity} problems in the deep models, the authors further put forward a method to jointly mine the first-order and second-order proximity \cite{tang2015line} during the learning process, where the former captures the local network structure, only characterizing the local pairwise similarity between the vertexes linked by edges. Nonetheless, many legitimate links are missing due to the sparsity of the network. Consequently, the first-order proximity alone cannot represent the network structure sufficiently.
Therefore, the authors further advance the second-order proximity, the indication of the similarity among the vertexes' neighborhood structures, to characterize the global structure of networks. 
With the first-order and second-order proximity adopted simultaneously, the model can capture both the local and global network structures well respectively. The authors also propose a semi-supervised architecture to preserve both the local and global network structure in the deep model, where the first-order proximity is exploited as the the supervised information by the supervised component exploits, preserving the local one, while the second-order proximity is reconstructed by the unsupervised component, preserving the global one. Moreover, as is illustrated in Figure \ref{pic:res_number}, there are much more pairs of vertexes having second-order proximity than first-order proximity. Hence, in the light of characterizing the network structure, importing second-order proximity can provide much more information.
In general, for purpose of preserving the network structure, SDNE is capable of mapping the data to a highly non-linear latent space while it is also robust to sparse networks.
To our best knowledge, SDNE is among the first to adopt deep learning structures for network representation learning.

\subsection{Problem Definition}
\begin{definition}[Graph] 
$G=(V,E)$ represents a graph, where $V = \{v_1, ..., v_n\}$ stands for $n$ vertexes and $E = \{e_{i,j}\}_{i,j=1}^n$ stands for the edges. Each edge $e_{i,j}$ is associated with a weight $s_{i,j} \geq 0$ \footnote{Negative links exist in signed network, but only non-negative links are considered here.}. For $v_i$ and $v_j$ without being linked by any edge, $s_{i,j} = 0$. Otherwise, $s_{i,j} = 1$ for unweighted graph and $s_{i,j} > 0$ for weighted graph.
\end{definition}

The goal of network embedding is mapping the graph data into a lower-dimensional latent space. Specifically, each vertex is mapped to a low-dimensional vector so that the network computation can be directly done in that latent space. As mentioned before, preserving both local and global structure is essential. First, the first-order proximity able to characterize the local network structure, is defined as follows.
\begin{definition}[First-Order Proximity] 
The first-order proximity represents the pairwise proximity between vertexes. For a vertex pair, first-order proximity between $v_i$ and $v_j$ is positive if $s_{i,j} > 0$ and $0$ otherwise.
\end{definition}

Spontaneously, network embedding is requisite to preserve the first-order proximity for the reason that it means that two vertexes linked by an observed edge in real-world networks are always similar. For instance, if a paper is cited by another, they are supposed to have some common topics. Nonetheless, real-world datasets often have such high sparsity that only a small portion is the observed links. Many vertexes with similarity are not linked by any edges in the networks. Accordingly, it is not sufficient to only capture the first-order proximity, which is why the second-order proximity is introduced as follows to characterize the global network structure.

\begin{definition}[Second-Order Proximity] 
The second-order proximity of a vertex pair represents the proximity of the pair's neighborhood structure. Let $\mathcal{N}_u=\{s_{u,1},...,s_{u,|V|}\}$ stand for the first-order proximity between $v_u$ and other vertexes. Second-order proximity is then decided by the similarity of $\mathcal{N}_u$ and $\mathcal{N}_v$.
\end{definition}

Intuitively, the second-order proximity presumes two vertexes to be similar if they share many common neighbors. In many fields, such an assumption has been proved reasonable \cite{dash2008context,jin2001structure}. For instance, if two linguistics words always have similar contexts, they will usually be similar \cite{dash2008context}. People sharing many common friends tend to be friends \cite{jin2001structure}. It has been demonstrated that the second-order proximity is a good metric for defining the similarity between vertex pairs even without being linked by edges \cite{liben2007link}, which can also highly improve the richness of vertex relationship consequently. Thus, taking the second-order proximity into consideration enables the model to capture the global network structure and relieve the sparsity problem as well.

To preserve both the local and global structure when in network embedding scenarios, we now focus on the problem of how to integrate the first-order and second-order proximity simultaneously, the definition of which is as follows.

\begin{definition}[Network Embedding] 
Given a graph $G=(V,E)$, the goal of network embedding is learning a mapping function $f: v_i \longmapsto{\mathbf{y_i} \in \mathbb{R}^d}$, where $d \ll |V|$. The target of the function is to enable the similarity between $\mathbf{y_i}$ and $\mathbf{y_j}$ to preserve the \textit{first-order} and \textit{second-order} proximity of $v_i$ and $v_j$ explicitly.
\end{definition}

\subsection{The SDNE Model}
\begin{figure}[htb!]
\centering
\includegraphics[width=0.7\textwidth]{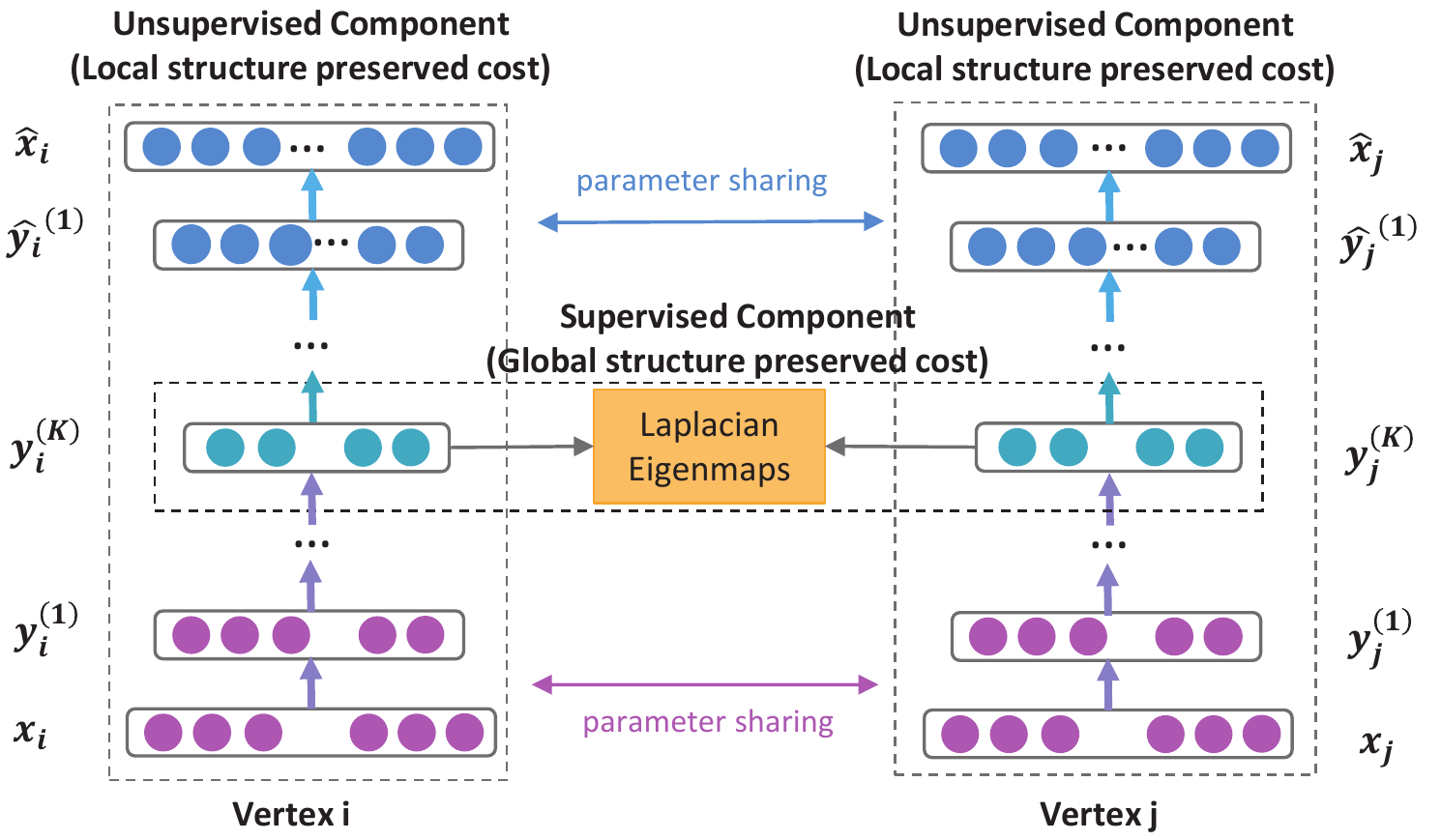}
\label{pic:MDBN}
\caption{Framework of the \emph{SDNE} model, figure from~\cite{wang2016structural}.}
\label{pic:graphencoder}
\end{figure}

This section discusses the semi-supervised SDNE model for network embedding, the framework of which is illustrated in Figure \ref{pic:graphencoder}. Specifically, for purpose of characterizing the highly non-linear network structure, the authors put forward a deep architecture containing numerous non-linear mapping functions to transform the input into a highly non-linear latent space. Moreover, for purpose of exploiting both the first-order and second-order proximity, a semi-supervised model is adopted, aiming to resolve the problems of structure-preserving and sparsity. 
We are able to obtain the neighborhood of each vertex. Hence, to preserve the second-order proximity by the method of reconstructing the neighborhood structure of every vertex, the authors project the unsupervised component. At the same time, for a small portion of vertex pairs, obtaining their pairwise similarities (i.e. the first-order proximity) is also possible. Thus, the supervised component is also adopted to exploit the first-order proximity as the supervised information for refining the latent representations. By optimizing these two types of proximity jointly in the semi-supervised model proposed, \emph{SDNE} is capable of preserving the highly-nonlinear local and global network structure well and is also robust when dealing with sparse networks. 


\subsubsection{Loss Functions}
We give definition of some notations and terms, before defining the loss functions, to be used later in Table \ref{tab:notations-sdne}. Note that $\hat{}$ symbol above the parameters stands for decoder parameters.
\begin{table}[htb!]
\centering\small\caption{Terms and Notations.}\label{tab:notations-sdne}
\begin{tabular}{|c|c|}
\hline
Symbol & Definition\\
\hline\hline
$n$ & number of vertexes\\
$K$ & number of layers\\
$S = \{\mathbf{s}_1,...,\mathbf{s}_n\}$ & the adjacency matrix for the network \\
$X = \{\mathbf{x}_i\}_{i=1}^n$, $\hat{X}=\{\mathbf{\hat{x}}_i\}_{i=1}^n$ & the input data and reconstructed data\\
$Y^{(k)} = \{\mathbf{y}_i^{(k)}\}_{i=1}^n$ & the $k$-th layer hidden representations \\
$W^{(k)}$, $\hat{W}^{(k)}$ & the $k$-th layer weight matrix \\
$\mathbf{b^{(k)}}$, $\mathbf{\hat{b}^{(k)}}$ & the $k$-th layer biases\\
$\theta = \{W^{(k)},\hat{W}^{(k)},\mathbf{b}^{(k)},\mathbf{\hat{b}}^{(k)}\}$ & the overall parameters \\
\hline
\end{tabular}
\end{table}

To begin with, we describe how the second-order proximity is exploited by the unsupervised component in order to preserve the global network structure.

The second-order proximity refers to the similarity of the neighborhood structure of a  vertex pair. Therefore, to capture it properly, we need to consider the neighborhood of each vertex when modeling. For a network $G = (V,E)$, its adjacency matrix $S$ containing $n$ instances $\mathbf{s}_1,...,\mathbf{s}_n$ can be easily obtained. For each instance $\mathbf{s}_i=\{s_{i,j}\}_{j=1}^n$, $s_{i,j}>0$ iff there exists a link between $v_i$ and $v_j$. Therefore, $\mathbf{s}_i$ represents the neighborhood structure of the vertex $v_i$, i.e., $S$ involves the information of each vertex's neighborhood structure. Based on $S$, conventional deep autoencoder \cite{salakhutdinov2009semantic} is extended for purpose of preserving the second-order proximity.

We briefly review the key ideas of deep autoencoder to be self-contained. A deep autoencoder is an unsupervised model consisting of an encoder and decoder. The two parts both consist of numerous non-linear functions, while the encoder maps the input data to the representation space and the decoder maps the representation space to reconstruction space. The hidden representations for each layer are defined as follows, given the input $\mathbf{x}_i$\footnote{Here the authors use sigmoid function $\sigma(x)=\frac{1}{1+exp(-x)}$ as the non-linear activation function}.
\begin{eqnarray}
\begin{aligned}
\label{eqn:calrepre}
\mathbf{y}_i^{(1)} &= \sigma(W^{(1)}\mathbf{x}_i+\mathbf{b}^{(1)}) \\
\mathbf{y}_i^{(k)} &= \sigma(W^{(k)}\mathbf{y}_i^{(k-1)}+\mathbf{b}^{(k)}), k=2,...,K . \\
\end{aligned}
\end{eqnarray}

With $\mathbf{y}_i^{(K)}$ obtained, the output $\hat{\mathbf{x}}_i$ can be obtained through the reversion the encoder's computation process. The goal of an autoencoder is minimizing the reconstruction error between input and output, to achieve which the following loss function can be defined.
\begin{eqnarray}
\begin{aligned}
\label{eqn:DMAE}
\mathcal{L} &= {\sum_{i=1}^n\|\hat{\mathbf{x}}_i-\mathbf{x}_i\|_2^2} .
\end{aligned}
\end{eqnarray}

As proven by \cite{salakhutdinov2009semantic}, the reconstruction formula can smoothly characterize the data manifolds and thus preserve it implicitly though not explicitly. Consider this case: if the adjacency matrix $S$ is inputted into the autoencoder, i.e., $\mathbf{x}_i = \mathbf{s}_i$, the reconstruction process will output the similar latent representations for the vertexes with similar neighborhood structures, as each instance $\mathbf{s}_i$ captures the neighborhood structure of the vertex $v_i$.

However, owing to some specific characteristics of the networks, such a reconstruction process cannot fit our problem straightforward. In the networks, some links can be observed, while many other legitimate links cannot. It suggests that while the links among vertexes do indicate their similarity, having no links does not necessarily indicate dissimilarity between the vertexes. In addition, the number of 0 elements in $S$ is far more than that of non-zero elements due to the sparsity of networks. Thus, by directly inputting $S$ to the conventional autoencoder, there is a tendency of reconstructing the 0 elements in $S$, which does not fit out expectation. Therefore, with the help of the revised objective function as follows, SDNE imposes more penalty to the reconstructing error for the non-zero elements than that for 0 elements:
\begin{eqnarray}
\begin{aligned}
\mathcal{L}_{2nd} &= \sum_{i=1}^n\|\mathbf{(\hat{x}}_i-\mathbf{x}_i)\odot \mathbf{b_i}\|_2^2\\
&= \|(\hat{X}-X) \odot B\|_{F}^2 ,
\label{eqn:loss_secondorder}
\end{aligned}
\end{eqnarray}
where $\odot$ stands for the Hadamard product, $\mathbf{b_i}=\{b_{i,j}\}_{j=1}^n$. If $s_{i,j}=0$, $b_{i,j}=1$, else $b_{i,j}=\beta>1$. Now by inputting $S$ to the revised deep autoencoder, vertexes with alike neighborhood structure will have close representations in the latent space, which is guaranteed by the reconstruction formula. In another word, reconstructing the second-order proximity among vertexes enables the unsupervised component of SDNE to preserve the global network structure.


As explained above, preserving the global and local network structure are both essential in this task. For purpose of representing the local network structure, the authors use first-order proximity, the supervised information for constraining the similarity among the latent representations of vertex pairs. Hence, the supervised component is designed to exploit this first-order proximity. The following definition of loss function is designed for this target.\footnote{To simplify the notations, network representations $Y^{(K)}=\{\mathbf{y}^{(K)}_i\}_{i=1}^n$ are denoted as $Y=\{\mathbf{y}_i\}_{i=1}^n$ by the authors.}.

\begin{eqnarray}
\begin{aligned}
\mathcal{L}_{1st} &= \sum_{i,j=1}^ns_{i,j}\|\mathbf{y}_i^{(K)}-\mathbf{y}_j^{(K)}\|_2^2 \\
                  &= \sum_{i,j=1}^ns_{i,j}\|\mathbf{y}_i-\mathbf{y}_j\|_2^2 .
\label{eqn:loss_firstorder}
\end{aligned}
\end{eqnarray}

A penalty is incurred by this objective function when similar vertexes are mapped far away in the latent representation space, which borrows the idea of Laplacian Eigenmaps \cite{belkin2003laplacian}. Other works on social networks \cite{jamali2010matrix} also use the similar idea. SDNE differs from these methods on the fact that it incorporates this idea into the deep model to embed the linked-by-edge vertexes close to each other in the latent representation space, preserving the first-order proximity consequently.

For purpose of preserving the first-order and second-order proximity simultaneously, Eq.\eqref{eqn:loss_firstorder} and Eq.\eqref{eqn:loss_secondorder} is combined by SDNE through jointly minimizing the objective functions as follows.
\begin{eqnarray}
\begin{aligned}
\mathcal{L}_{mix}&=\mathcal{L}_{2nd}+\alpha \mathcal{L}_{1st}+ \nu \mathcal{L}_{reg} \\
&=\|(\hat{X}-X) \odot B\|_{F}^2
+\alpha\sum_{i,j=1}^ns_{i,j}\|\mathbf{y}_i-\mathbf{y}_j\|_2^2+\nu \mathcal{L}_{reg},
\label{eqn:loss_mix}
\end{aligned}
\end{eqnarray}
where $\mathcal{L}_{reg}$ stands for an $\mathcal{L}$2-norm regularizer term as follows, aiming to avoid overfitting:
\begin{equation}
\mathcal{L}_{reg}=\frac{1}{2}\sum_{k=1}^K(\|W^{(k)}\|_F^2+\|\hat{W}^{(k)}\|_F^2). \nonumber
\end{equation}


\subsubsection{Optimization}
For purpose of optimizing the model mentioned above, we minimize $\mathcal{L}_{mix}$ as a function of $\theta$. Specifically, the critical step is computing the partial derivative of $\partial{\mathcal{L}_{mix}}/\partial{\hat{W}^{(k)}}$ and $\partial{\mathcal{L}_{mix}}/\partial{W}^{(k)}$ with the following detailed mathematical form:
\begin{eqnarray}
\begin{aligned}
\frac{\partial{\mathcal{L}_{mix}}}{\partial{\hat{W}^{(k)}}} &= \frac{\partial{\mathcal{L}_{2nd}}}{\partial{\hat{W}^{(k)}}} + \nu\frac{\partial{\mathcal{L}_{reg}}}{\partial{\hat{W}^{(k)}}}\\
\frac{\partial{\mathcal{L}_{mix}}}{\partial{W^{(k)}}} &= \frac{\partial{\mathcal{L}_{2nd}}}{\partial{W}^{(k)}} + \alpha\frac{\partial{\mathcal{L}_{1st}}}{\partial{W}^{(k)}}+\nu\frac{\partial{\mathcal{L}_{reg}}}{\partial{W}^{(k)}},k=1,...,K .
\label{eqn:derivative}
\end{aligned}
\end{eqnarray}

First, we focus on $\partial{\mathcal{L}_{2nd}}/\partial{\hat{W}}^{(K)}$, which can be rephrased as follows.
\begin{equation}
\frac{\partial{\mathcal{L}_{2nd}}}{\partial{\hat{W}^{(K)}}} = \frac{\partial{\mathcal{L}_{2nd}}}{\partial{\hat{X}}}\cdot\frac{\partial{\hat{X}}}{\partial{\hat{W}^{(K)}}}.
\end{equation}

In light of Eq.\eqref{eqn:loss_secondorder}, for the first term we have
\begin{equation}
\frac{\partial{\mathcal{L}_{2nd}}}{\partial{\hat{X}}} = 2(\hat{X}-X)\odot B .
\end{equation}

The computation of the second term $\partial{\hat{X}}/\partial{\hat{W}}$ is simple because $\hat{X}=\sigma(\hat{Y}^{(K-1)}\hat{W}^{(K)}+\hat{b}^{(K)})$, with which $\partial{\mathcal{L}_{2nd}}/{\partial{\hat{W}^{(K)}}}$ is available. Through back-propagation method, we can iteratively acquire $\partial{\mathcal{L}_{2nd}}/{\partial{\hat{W}}^{(k)}},k=1,...K-1$ and $\partial{\mathcal{L}_{2nd}}/{\partial{W}^{(k)}},k=1,...K$. 

Next, we move on to the partial derivative of $\partial{\mathcal{L}_{1st}}/\partial{W}^{(k)}$. The loss function of $\mathcal{L}_{1st}$ can be rephrased as follows.
\begin{equation}
\mathcal{L}_{1st} = \sum_{i,j=1}^ns_{i,j}\|\mathbf{y}_i-\mathbf{y}_j\|_2^2 = 2tr(Y^TLY) ,
\label{eqn:loss_secondorder_matrix}
\end{equation}
where $L = D - S$, $D\in\mathbb{R}^{n\times n}$ stands for the diagonal matrix where $D_{i,i} = \sum\nolimits_{j}{s_{i,j}}$.

Next, we center upon the computation of $\partial{\mathcal{L}_{1st}}/{\partial{W}^{(K)}}$ first:
\begin{equation}
\frac{\partial{\mathcal{L}_{1st}}}{\partial{W}^{(K)}} = \frac{\partial{\mathcal{L}_{1st}}}{\partial{Y}}\cdot\frac{\partial{Y}}{\partial{W}^{(K)}} .
\end{equation}

Because $Y = \sigma(Y^{(K-1)}W^{(K)}+b^{(K)})$, the computation of the second term $\partial{Y}/{\partial{W}^{(K)}}$ is simple. For $\partial{\mathcal{L}_{1st}}/{\partial{Y}}$, we hold
\begin{equation}
\frac{\partial{\mathcal{L}_{1st}}}{\partial{Y}} = 2(L+L^T)\cdot Y .
\end{equation}

Likewise, the calculation of partial derivative of $\mathcal{L}_{1st}$ can be finally finished through back-propagation.

All the partial derivatives of the parameters have been acquired now. After implementing parameter initialization, we can optimize the deep model proposed above with stochastic gradient descent. It is worth mentioning that owing to its high nonlinearity, the model may fall into many local optimum in the parameter space. Hence, the authors adopt Deep Belief Network as a method of pretraining at first \cite{hinton2006fast} to find a good region in parameter space, which has been proved to be an fundamental way of initialization for deep learning architectures\cite{erhan2010does}. Algorithm~\ref{alg:SDNE} presents the complete algorithm.
\begin{algorithm}[htb]
\caption{Training Algorithm for \emph{SDNE}}
\label{alg:SDNE}
\begin{algorithmic}[1]
\Require the network $G=(V,E)$ with adjacency matrix $S$, the parameters $\nu$ and $\alpha$
\Ensure Representations Y of the network and updated Parameters: $\theta$
\State Pretrain the model with deep belief network, obtaining the parameters $\theta=\{\theta^{(1)},...,\theta^{(K)}\}$
\State $X = S$
\Repeat
\State Apply Eq.\eqref{eqn:calrepre} to calculate $\hat{X}$ and $Y=Y^{K}$ given $X$ and $\theta$.
\State $\mathcal{L}_{mix}(X;\theta) = \|(\hat{X}-X) \odot B\|_{F}^2
+2\alpha tr(Y^TLY)+\nu \mathcal{L}_{reg}$.
\State Utilize $\partial{L_{mix}}/\partial{\theta}$ to back-propagate throughout the entire network based on Eq.\eqref{eqn:derivative} to calculate the updated parameters of $\theta$.
\Until{convergence}
\State Return the network representations $Y = Y^{(K)}$
\end{algorithmic}
\end{algorithm}

\subsection{Analysis and Discussions on SDNE}

\textbf{New vertexes.} Learning representations for newly arrived vertexes is a practical issue for network embedding. If we know the connections of a new vertex $v_k$ to the existing vertexes, its adjacency vector $\mathbf{x}=\{s_{1,k},...,s_{n,k}\}$ is easy to obtained, where $s_{i,k}$ indicates the similarity between the new vertex $v_k$ and the existing $v_i$. Then $\mathbf{x}$ can be simply fed into the deep model, after which we can calculate the representations for $v_k$ with the trained parameters $\theta$. For such a process, the time complexity is $O(1)$. Nonetheless, SDNE fails when there are no connections between existing vertexes and $v_k$ in the network. 

\textbf{Training Complexity.} The complexity of SDNE is $O(ncdI)$, where $n$ stands for the number of vertexes, $c$ represents the average degree of the network, $d$ stands for the maximum dimension of the hidden layer, and $I$ represents the number of iterations. $d$ is often related to the dimension of the embedding vectors but not $n$, while $I$ is also independent of $n$. In real-world applications, the parameter $c$ can be regarded as a constant. For instance, the maximum number of a user's friends is always bounded \cite{tian2014learning}in a social network. Meanwhile, $c=k$ in a top-k similarity graph. Thus, $cdI$ is also independent of $n$. So the total training complexity is actually linear to $n$ .
\section{Global Structure Preserving Network Embedding}
\label{sec:DRNE_introduction}

As is discussed, one fundamental problem of network embedding is how to preserve the vertex similarity in an embedding space, i.e., two vertexes should have the similar embedding vectors if they have similar local structures in the original network. To quantify the similarity among vertexes in a network, the most common one among multiple methods is {\em structural equivalence}~\cite{leicht2006vertex}, where two vertexes sharing lots of common network neighbors are considered structurally equivalent. Besides, preserving structural equivalence through high-order proximities~\cite{tang2015line,wang2016structural} is the aim of most previous work on network embedding, where network neighbors are extended to high-order neighbors, e.g., direct neighbors and neighbors-of-neighbors, etc.

However, vertexes without any common neighbors can also occupy similar positions or play similar roles in many cases. For instance, two mothers share the same pattern of connecting with several children and one husband (the father). The two mothers do share similar positions or roles although they are not structurally equivalent if they do not share any relatives. These cases lead to an extended definition of vertex similarity known as {\em regular equivalence}:
two regularly equivalent vertexes have network neighbors which are themselves similar (i.e., regularly equivalent)~\cite{rossi2015role}.
As neighbor relationships in a network can be defined recursively, we remark that regularly equivalence is able to reflect the global structure of a network.
Besides, regular equivalence is, apparently, a relaxation of structural equivalence. Structural equivalence promises regular equivalence, while the reverse direction does not hold. Comparatively, regular equivalence is more capable and flexible of covering a wider range of network applications with relation to node importance or structural roles, but is largely ignored by previous work on network embedding.

For purpose of preserving global structure and regular equivalence in network embedding, i.e., two nodes of regularly equivalence should have similar embeddings, a simple way is to explicitly compute the regular equivalence of all vertex pairs and preserve the similarities of corresponding embeddings of nodes to approximate their regular equivalence. Nevertheless, due to the high complexity in computing regular equivalence in large-scale networks, this idea is infeasible. 
Another way is to replace regular equivalence with simpler graph theoretic metrics, such as centrality measures. Although many centrality measures have been proposed to characterize the importance and role of a vertex, it is still difficult to learn general and task-independent node embeddings because one centrality can only capture a specific aspect of network role. In addition, some centrality measures, e.g., betweeness centrality, also bear high computational complexity. Thus, how to efficiently and effectively preserve regular equivalence in network embedding is still an open problem.
	
\begin{figure}
	\hspace*{-0.1cm}
	\centering
	\includegraphics[scale=0.5]{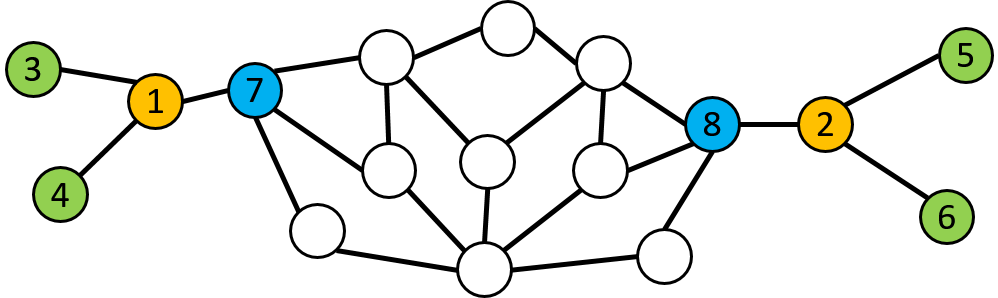}
	\caption{A simple graph to illustrate the rationality of why recursive embedding can preserve regular equivalence. The regularly equivalent nodes share the same color, figure from~\cite{tu2018deep}.}
	\label{fig:graph}
\end{figure}
	
Fortunately, the recursive nature in the definition of regular equivalence enlightens Tu~\etal~\cite{tu2018deep} to learn network embedding in a recursive way, i.e., the embedding of one node can be aggregated by its neighbors' embeddings. In one recursive step (Figure~\ref{fig:graph}), if nodes 7 and 8, 4 and 6, 3 and 5 are regularly equivalent and thus have similar embeddings already, then nodes 1 and 2 would also have similar embeddings, resulting in their regular equivalence. It is this idea that inspires the design of the {\em Deep Recursive Network Embedding} (DRNE) model~\cite{tu2018deep}. In specific, the neighbors of a node are transformed into an ordered sequence and a layer normalized LSTM (Long Short Term Memory networks)~\cite{hochreiter1997long} is proposed to aggregate the embeddings of neighbors into the embedding of the target node in a non-linear way.

\subsection{Preliminaries and Definitions}
	In this section, we discuss the Deep Recursive Network Embedding (DRNE) model whose framework is demonstrated in Figure~\ref{fig:drne_framework}. Taking node $0$ in Figure~\ref{fig:drne_framework} as an example, we sample three nodes $1, 2, 3$ from its neighborhoods and sort them by degree as a sequence $(3, 1, 2)$. We use the embeddings of the neighborhoods sequence $\mathbf{X}_3, \mathbf{X}_1, \mathbf{X}_2$ as input, aggregating them by a layer normalized LSTM to get the assembled representation $h_T$. By reconstructing the embedding $\mathbf{X}_0$ of node $0$ with the aggregated representation $h_T$, the embedding $\mathbf{X}_0$ can preserve the local neighborhood structure. On the other hand, we use the degree $d_0$ as weak supervision information of centrality and put the aggregated representation $h_T$ into a multilayer perceptron (MLP) to approximate degree $d_0$. The same process is conducted for each other node. When we update the embedding of the neighborhoods $\mathbf{X}_3, \mathbf{X}_1, \mathbf{X}_2$, it will affect the embeddings $\mathbf{X}_0$. Repeating this procedure by updating the embeddings iteratively, the embeddings $\mathbf{X}_0$ can contain structural information of the whole network. 
	
	\begin{figure}
		\centering
		\includegraphics[scale=0.5]{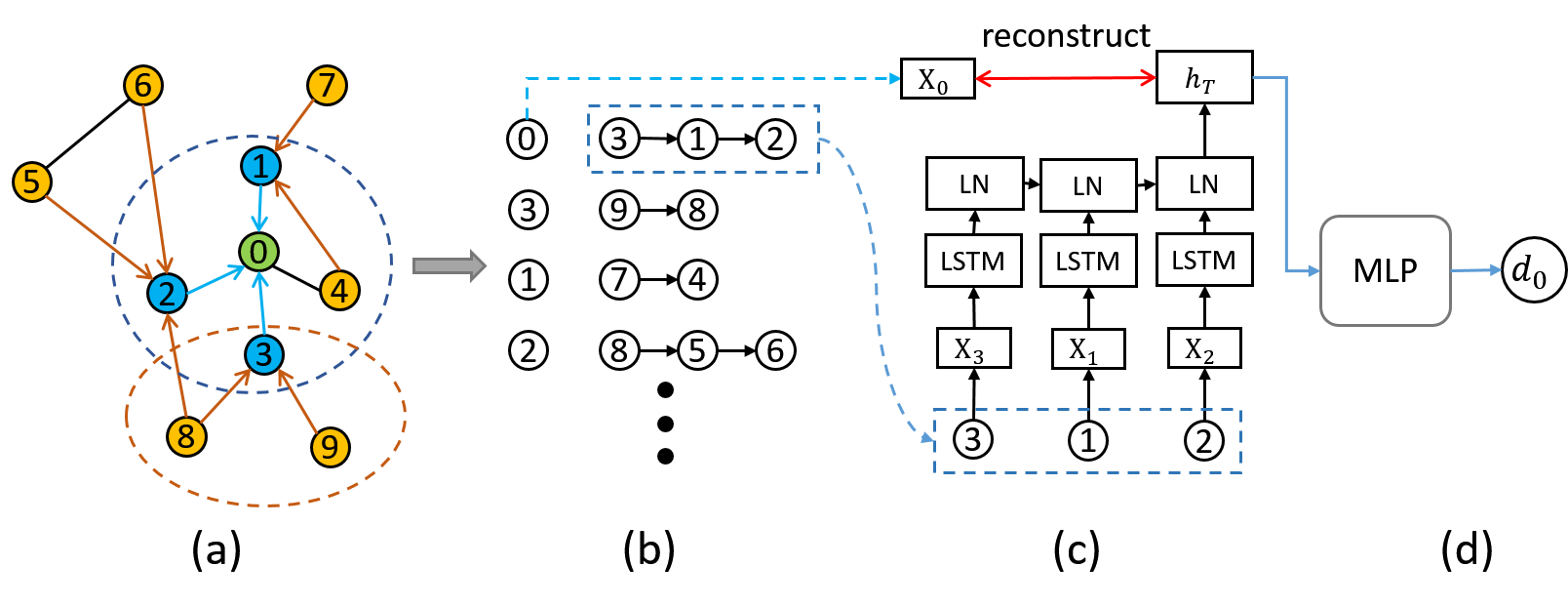}
		\caption{{Deep Recursive Network Embedding (DRNE). (a) Sampling neighborhoods. (b) Sorting neighborhoods by their degrees. (c) Layer-normalized LSTM to aggregate embeddings of neighboring nodes into the embedding of the target node. $X_i$ is the embedding of node $i$ and LN means layer normalization. (d) A Weakly guided regularizer. Figure from~\cite{tu2018deep}.}}
		\label{fig:drne_framework}
	\end{figure}
	
	Given a network $G = (V, E)$, where $V$ stands for the set of nodes and $E \in V \times V$ edges. For a node $v \in V$, $\mathcal{N}(v) = \{u | (v , u) \in E\}$ represents the set of its neighborhoods. The learned embeddings are defined as $\mathbf{X} \in \mathbb{R}^{|V| \times k}$ where $k$ is the dimension and $\mathbf{X}_v \in \mathbb{R}^k$ is the embedding of node $v$. The degree of node $v$ is defined as $d_v = |\mathcal{N}(v)|$ and function $I(x)=1$ if $x \ge 0$ otherwise 0. The formal definition of structural equivalence and regular equivalence is given as follows.
	
	\begin{definition}[Structural Equivalence]
		\label{def:se}
		We denote $s(u) = s(v)$ if nodes $u$ and $v$ are structurally equivalent. Then $s(u) = s(v)$ if and only if $\mathcal{N}(u) = \mathcal{N}(v)$.
	\end{definition}
	
	\begin{definition}[Regular Equivalence]
		\label{def:re}
		We denote $r(u) = r(v)$ if nodes $u$ and $v$ are regularly equivalent. Then $r(u) = r(v)$ if and only if $\{r(i) | i \in \mathcal{N}(u)\} = \{r(j) | j \in \mathcal{N}(u)\}$.
	\end{definition}

\subsection{The DRNE Model}
\subsubsection{Recursive Embedding}
	In light of Definition~\ref{def:re}, DRNE learns the embeddings of nodes recursively: the embedding of a target node can be approximated by aggregating the embeddings of its neighbors, So we can use the following loss function:
	\begin{equation}
	\label{equ:rec}
	\mathcal{L}_{1} = \sum_{v \in V}||\mathbf{X}_v-Agg(\{\mathbf{X}_u| u \in \mathcal{N}(v)\})||_F^2,
	\end{equation}
	where $Agg$ is the function of aggregation. In each recursive step, the local structure of the neighbors of the target node can be preserved by its learned embedding. Therefore, the learned node embeddings can incorporate the structural information in a global sense by updating the learned representations iteratively, which consists with the definition of regular equivalence.

	As for the aggregating function, DRNE utilizes the layer normalized Long Short-Term Memory(ln-LSTM)~\cite{ba2016layer} due to the highly nonlinearity of the underlying structures of many real networks~\cite{luo2011cauchy}. LSTM is an effective model for modeling sequences, as is known to all. Nonetheless, in networks, the neighbors of a node have no natural ordering. Here the degree of nodes is adopted as the criterion of sorting neighbors in an ordered sequence for the reason that taking degree as measure for neighbor ordering is the most efficient and that degree is a crucial part in many graph-theoretic measures, notably those relating to structural roles, e.g. Katz~\cite{nathan2017dynamic} and PageRank~\cite{page1999pagerank}.
	
	Suppose $\{X_1$, $X_2$, ..., $X_t$, ..., $X_T\}$ are the embeddings of the ordered neighbors. At each time step $t$, the hidden state $h_t$ is a function of its previous hidden state $h_{t-1}$ and input embedding $X_t$, i.e., $h_t = LSTMCell(h_{t-1}, X_t)$. The information of hidden representation $h_t$ will become increasingly abundant while the embedding sequence is processed by LSTM Cell recursively from 1 to $T$. Therefore, $h_T$ can be treated as the aggregation of the representation from neighbors. In addition, LSTM with gating mechanisms is effective in learning long-distance correlations in long sequences. In the structure of LSTM, the input gate along with old memory decides what new information to be stored in memory, the forget gate decides what information to be thrown away from the memory and output gate decides what to output based on the memory. Specifically, $LSTMCell$, the LSTM transition equation, can be written as follows.
	\begin{align}
	\mathbf{f}_t &= \sigma(\mathbf{W}_f\cdot[\mathbf{h}_{t-1}, \mathbf{X}_t]+\mathbf{b}_f), \label{equ:lstm:f} \\
	\mathbf{i}_t &= \sigma(\mathbf{W}_i\cdot[\mathbf{h}_{t-1}, \mathbf{X}_t]+\mathbf{b}_i), \label{equ:lstm:i} \\
	\mathbf{o}_t &= \sigma(\mathbf{W}_o\cdot[\mathbf{h}_{t-1}, \mathbf{X}_t]+\mathbf{b}_o), \label{equ:lstm:o} \\
	\tilde{\mathbf{C}_t} &= tanh(\mathbf{W}_C\cdot[\mathbf{h}_{t-1}, \mathbf{X}_t]+\mathbf{b}_C), \label{equ:lstm:tC}\\
	\mathbf{C}_t &= \mathbf{f}_t*\mathbf{C}_{t-1}+\mathbf{i}_t*\tilde{\mathbf{C}_t}, \label{equ:cell} \\
	\mathbf{h}_t &= \mathbf{o}_t*tanh(\mathbf{C}_t), \label{equ:lstm:h}
	\end{align}
	where $\sigma$ stands for the sigmoid function, $*$ and $\cdot$ represent element-wise product and matrix product respectively, $C_t$ represents the cell state, $\mathbf{i}_t$, $\mathbf{f}_t$ and $\mathbf{o}_t$ are input gate, forget gate and output gate respectively. $\mathbf{W}_*$ and $\mathbf{b}_*$ are the parameters to be learned.
	
	Moreover, the Layer Normalization~\cite{ba2016layer} is adopted in DRNE for purpose of avoiding the problems of exploding or vanishing gradients~\cite{hochreiter1997long} when long sequences are taken as input. The layer normalized LSTM makes re-scaling all of the summed input invariant, resulting in much more stable dynamics. In particular, with the extra normalization as follows, it re-centers and re-scales the cell state $C_t$ after Eq.~\eqref{equ:cell}.
	\begin{equation}
	C'_t = \frac{g}{\Sigma_t}*(C_t-\mu_t),
	\end{equation}
	where $\mu_t = 1/k\sum_{i=1}^{k}C_{ti}$ and $\Sigma_t = \sqrt{1/k\sum_{i=1}^{k}(C_{ti}-\mu_t)^2}$ are the
	mean and variance of $C_t$ , and $g$ is the gain parameter scaling the normalized activation.
	
\subsubsection{Regularization}
	Without any other constraints, $\mathcal{L}_1$ defined in Eq.~\eqref{equ:rec} represents the recursive embedding process according to Definition~\ref{def:re}. Its expressive power is so strong that we can obtain multiple solutions to satisfy the recursive process defined above. The model take a risk of degenerating to a trivial solution: all the embeddings become $0$. 
	For purpose of avoiding this degeneration, DRNE takes node degree as the weakly guiding information, i.e., imposing a constraint that the learned embedding of a target node should be capable of approximating its degree. Consequently, the following regularizer is designed:
	\begin{equation}
	\label{equ:reg}
	\mathcal{L}_{reg} = \sum_{v \in V} \left\| \log(d_v+1)-MLP(Agg(\{\mathbf{X}_u | u \in \mathcal{N}(v)\})) \right\|_F^2,
	\end{equation}
	where the degree of node $v$ is denoted by $d_v$ and $MLP$ stands for a single-layer multilayer perceptron taking the rectified linear unit (ReLU)~\cite{glorot2011deep} as activation function, defined as $ReLU(x) =max(0, x)$. DRNE minimizes the overall objective function by combining reconstruction loss in Eq.~\eqref{equ:rec} and the regularizer in Eq.~\eqref{equ:reg}:
	\begin{equation}
	\mathcal{L} = \mathcal{L}_1+\lambda\mathcal{L}_{reg},
	\end{equation}
	where $\lambda$ is the balancing weight for regularizer. Note that degree information is not taken as the supervisory information for network embedding here. Alternatively, it is finally auxiliary to avoid degeneration. Therefore, the value of $\lambda$ should be set small.
	
	\textbf{Neighborhood Sampling.} The node degrees usually obey a heavy-tailed distribution~\cite{eom2015tail} in real networks, i.e., the majority of nodes have very small degrees while a minor number of nodes have very high degrees. Inspired by this phenomenon, DRNE downsamples the neighbors of those large-degree nodes before feeding them into the ln-LSTM to improve the efficiency. In specific, an upper bound for the number of neighbors $S$ is set. When the number of neighbors exceeds $S$, the neighbors are downsampled into $S$ different nodes. An example on the sampling process is shown in Figure \ref{fig:drne_framework} (a) and (b). In networks obeying power-law, more unique structural information are carried by the large-degree nodes than the common small-degree nodes. Therefore, a biased sampling strategy is adopted by DRNE to retain the large-degree nodes by setting $P(v) \propto d_v$, i.e., the probability of sampling neighbor node $v$ being proportional to its degree.
	
\subsubsection{Optimization}
    For purpose of optimizing DRNE, we need to minimize the total loss $\mathcal{L}$ as a function of the embeddings $X$ and the neural network parameters set $\theta$. These parameters are optimized by Adam~\cite{kingma2014adam}. The BackPropagation Through Time (BPTT) algorithm~\cite{werbos1990backpropagation} estimates the derivatives. At the beginning of the training, the learning rate $\alpha$ for Adam is initialized to 0.0025. Algorithm \ref{algorithm-drne} demonstrates the whole algorithm.
	
	\begin{algorithm}
	\caption{Deep Recursive Network Embedding}
	\label{algorithm-drne}
	\begin{algorithmic}[1]
	\Require the network $\mathbf{G} = (V, E)$
	\Ensure the embeddings $\mathbf{X}$,  updated neural network parameters set $\theta$
	\State initial $\theta$ and $\mathbf{X}$ by random process
	\While{the value of objective function do not converge} 
	\For{a node $v \in E$}
	\State downsampling $v$'s neighborhoods if its degree exceeds $S$
	\State sort the neighborhoods by their degrees
	\State fixed aggregator function, calculate partial derivative $\partial\mathcal{L}/\partial\mathbf{X}$ to update embeddings $\mathbf{X}$
	\State fixed embeddings, calculate partial derivative $\partial\mathcal{L}/\partial\theta$ to update $\theta$
	\EndFor
	\EndWhile
	\State obtain the node representations $X$
	\end{algorithmic}
	\end{algorithm}
	
\subsubsection{Theoretical Analysis}
	It can be theoretically proved that the resulted embeddings from DRNE can reflect several common and typical centrality measures closely related to regular equivalence. In the following process of proof, the regularizer term in Eq.~\eqref{equ:reg} for eliminating degeneration is ignored without loss of generality.
	
	\begin{theorem}
		\label{the:main}
		Eigenvector centrality~\cite{bonacich2007some}, PageRank centrality~\cite{page1999pagerank} and degree centrality are three optimal solutions of DRNE, respectively.
	\end{theorem}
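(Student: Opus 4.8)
The plan is to exploit the fact that the DRNE objective, once the regularizer is dropped as the statement permits, is a sum of squared reconstruction errors and is therefore bounded below by zero, attaining its global minimum exactly when every node embedding coincides with the aggregation of its neighbors' embeddings. Writing the loss from Eq.~\eqref{equ:rec} as $\mathcal{L}_{1} = \sum_{v \in V}\|\mathbf{X}_v - Agg(\{\mathbf{X}_u \mid u \in \mathcal{N}(v)\})\|_F^2 \ge 0$, a configuration $\mathbf{X}$ is an optimal solution whenever it satisfies the fixed-point system
\[
\mathbf{X}_v = Agg(\{\mathbf{X}_u \mid u \in \mathcal{N}(v)\}), \qquad \forall\, v \in V.
\]
Hence, to show that a given centrality vector is an optimal solution, it suffices to exhibit one concrete choice of the aggregator $Agg$ for which that centrality vector is a fixed point, and then to argue that the required aggregator lies within the representational capacity of the layer-normalized LSTM used by DRNE.

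Next I would match each of the three centralities to its defining recurrence, each of which is itself a form of neighbor aggregation. For eigenvector centrality, the principal eigenvector $\mathbf{x}$ of the adjacency matrix satisfies $\lambda\,\mathbf{x}_v = \sum_{u \in \mathcal{N}(v)}\mathbf{x}_u$, so choosing $Agg$ to be the scaled sum $Agg(\{\mathbf{X}_u\}) = \tfrac{1}{\lambda}\sum_{u \in \mathcal{N}(v)}\mathbf{X}_u$ makes $\mathbf{x}$ a fixed point. For PageRank, the stationary recurrence $p_v = \sum_{u \in \mathcal{N}(v)} p_u/d_u$ is reproduced by the degree-normalized sum $Agg(\{\mathbf{X}_u\}) = \sum_{u \in \mathcal{N}(v)}\mathbf{X}_u/d_u$. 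For degree centrality, note that $d_v = |\mathcal{N}(v)| = \sum_{u \in \mathcal{N}(v)} 1$, so the counting aggregator that returns the cardinality of its input fixes $\mathbf{X}_v = d_v$; here the neighbor values are irrelevant and only the size of the neighbor set enters. In each case the verification that $\mathcal{L}_{1} = 0$, and therefore that the vector is a global minimizer, is then immediate.

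The main obstacle is the last link: showing that the ln-LSTM aggregator can actually realize, or uniformly approximate, each of these three operations — plain summation, degree-weighted summation, and cardinality counting. This is where I expect the real work to lie, since the LSTM is a fixed parametric family rather than an arbitrary set function. I would argue that additive accumulation is the natural mode of an LSTM: by driving the input and forget gates toward one and operating away from the saturation regime of the activation, the cell state $\mathbf{C}_t$ in Eq.~\eqref{equ:cell} accumulates its inputs additively across the $T = d_v$ time steps, yielding a sum; a counting aggregator then follows by feeding a constant input, and a degree-weighted sum follows because the neighbors are presented to the LSTM sorted by degree (Figure~\ref{fig:drne_framework}), so each neighbor's degree $d_u$ is available to rescale its contribution. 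Care is needed to confirm that the normalization step $C'_t = \tfrac{g}{\Sigma_t}(C_t - \mu_t)$ does not destroy these linear accumulation behaviors; I would handle this by selecting the gain $g$ and working in the regime where layer normalization acts as a benign affine rescaling, so that each target centrality remains a genuine zero-loss fixed point of $\mathcal{L}_{1}$ and hence one of its optimal solutions.
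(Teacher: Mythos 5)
Your high-level strategy matches the paper's: drop the regularizer, note that $\mathcal{L}_1 \ge 0$ with equality exactly at fixed points of the aggregator, and then, for each centrality, exhibit an aggregator realizable by the LSTM that fixes that centrality. The eigenvector case goes through as you describe, since $\tfrac{1}{\lambda}\sum_{u\in\mathcal{N}(v)}\mathbf{X}_u$ is a function of the inputs alone. The genuine gap is in the PageRank case: the aggregator $\sum_{u\in\mathcal{N}(v)}\mathbf{X}_u/d_u$ is \emph{not} a function of the inputs the LSTM actually receives. If $\mathbf{X}_u$ is the scalar PageRank value of $u$, the neighbor degrees $d_u$ simply do not appear in the input sequence, and your claim that sorting the neighbors by degree makes $d_u$ ``available'' is false --- the ordering conveys only ordinal information, and a position in a permutation cannot be decoded into the numerical weight $1/d_u$ needed to rescale each term. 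No choice of LSTM parameters (and no appeal to layer normalization) can make the network compute a quantity that is not determined by its input sequence, so PageRank cannot be shown to be a zero-loss fixed point with one-dimensional embeddings.

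The paper closes exactly this hole by making the embedding two-dimensional, $\mathbf{X}_u=[F(u),C(u)]$ with $F(u)=1/d_u$ for PageRank (and for degree), so the per-neighbor weight travels inside the embedding itself. The fixed-point requirement then has two components: the second coordinate must accumulate $\sum_{u\in\mathcal{N}(v)}F(u)C(u)=C(v)$, a bilinear accumulation that the linear-activation LSTM cell performs via Eq.~\eqref{equ:cell}; and the first coordinate must reconstruct $F(v)=f(\{F(u):u\in\mathcal{N}(v)\})$, e.g. $1/d_v=1/\sum_{u}I(F(u))$, which the paper handles by invoking RNN computational universality (its Lemma~\ref{lemma:1}, citing Siegelmann) rather than your heuristic ``gates near one, away from saturation'' argument; it also sidesteps your layer-normalization worry by simply assuming linear activations in the construction. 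The general form of this device is the paper's Theorem~\ref{the:2}: any centrality with $C(v)=\sum_{u\in\mathcal{N}(v)}F(u)C(u)$ and $F(v)=f(\{F(u)\})$ for computable $f$ is an optimal solution, after which the three named centralities are checked against this template. Your eigenvector and degree arguments survive as special cases, but without the two-coordinate embedding and the universality lemma your proof of the PageRank claim does not go through.
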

	
	
	\begin{lemma}
		\label{lemma:1}
		For any computable function, there exists a finite recurrent neural network (RNN)~\cite{mikolov2010recurrent} that can compute it.
	\end{lemma}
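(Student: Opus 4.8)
The plan is to prove the lemma by reducing it to the simulation of a Turing machine, invoking the Church--Turing thesis to identify the class of computable functions with the class of functions computable by Turing machines. Once this identification is made, it suffices to exhibit a \emph{finite} recurrent network whose dynamics faithfully reproduce the step-by-step evolution of an arbitrary Turing machine $M$; the network then computes whatever $M$ computes. This is precisely the route taken by the classical Turing-completeness results for recurrent networks, and I would follow that construction.

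First, I would fix a Turing machine $M$ with a finite control set of states $Q$ and a finite tape alphabet $\Sigma$, and describe how to encode a machine configuration into a rational-valued state vector of fixed dimension. The two half-tapes to the left and to the right of the head can each be stored as a single number through a positional encoding in a suitable base (writing the tape symbols as digits), so that reading the symbol under the head, pushing a symbol, and moving the head reduce to multiplying or dividing by the base and taking integer or fractional parts. The finite control state is encoded by a constant number of neurons, for instance a one-hot block. Because $|Q|$ and $|\Sigma|$ are finite and the encoding uses a fixed number of neurons regardless of the length of the input, the resulting network has finitely many units, which is exactly what makes the RNN finite in the sense of the statement.

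Next, I would implement one transition step of $M$ as a bounded number of recurrent update steps. The transition function of $M$ is a finite table mapping a (current state, scanned symbol) pair to a (next state, written symbol, head move) triple; since this table is finite, each of its outputs can be selected by a linear combination of the encoded state and scanned symbol composed with the nonlinear activation, which acts here as a thresholding and selection device. The arithmetic shifts that realize head motion on the encoded tapes are affine operations and therefore also lie within the expressive range of a single recurrent layer. Iterating the update then mirrors the run of $M$, and a designated neuron signals the halting configuration, at which point the output is decoded from the state vector.

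The main obstacle is the step that couples a continuous, real-valued dynamical system to the discrete, symbolic bookkeeping of a Turing machine: one must show that the activation function can both extract the leading digit of the tape encoding exactly and make the sharp all-or-nothing branching decisions dictated by the finite transition table, without numerical error accumulating over unboundedly many steps. With a saturated-linear activation this is clean, since the relevant push and pop operations and the table lookups become exact piecewise-linear maps; with a smooth activation such as the sigmoid used elsewhere in this chapter, the same effect must be obtained through a careful quantization argument, and verifying that the encoding remains uncorrupted across arbitrarily long computations is the delicate part of the argument.
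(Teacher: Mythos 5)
Your proposal is correct and coincides with the paper's approach: the paper proves this lemma in a single line by invoking Theorem 1 of \cite{siegelmann1995computational}, and your Turing-machine simulation (half-tapes stored positionally in rational-valued neurons, finite control as a one-hot block, transitions realized exactly by a piecewise-linear saturating activation) is precisely the construction underlying that cited theorem, including your correct observation that the saturated-linear case is exact while smooth activations require a quantization argument. The only difference is that you unpack the black box that the paper leaves closed by citation.
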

	\begin{proof}
		This is a direct consequence of Theorem 1 in \cite{siegelmann1995computational}.
	\end{proof}
	
	\begin{theorem}
		\label{the:2}
		If the centrality $C(v)$ of node $v$ satisfies that $C(v) = \\ \sum_{u \in \mathcal{N}(v)}F(u)C(u)$ and $F(v) = f(\{F(u), u\in\mathcal{N}(v)\})$ where $f$ is any computable function, then $C(v)$ is one of the optimal solutions for DRNE.
	\end{theorem}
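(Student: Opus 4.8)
The plan is to exhibit an explicit embedding together with an aggregator that drives the DRNE reconstruction loss to $0$, and then observe that $0$ is the global minimum. Since $\mathcal{L}_1 = \sum_{v \in V}\|\mathbf{X}_v - Agg(\{\mathbf{X}_u : u \in \mathcal{N}(v)\})\|_F^2$ is a sum of squared norms, it is bounded below by $0$; hence any pair (embedding, aggregator) making every summand vanish is automatically optimal, and it suffices to produce one such configuration from the given centrality $C$.

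First I would encode the recursive data into the embeddings. Define the embedding of each node by packing the two scalars that drive the recursion into its leading coordinates, $\mathbf{X}_v = (C(v), F(v), 0, \dots, 0) \in \mathbb{R}^k$. With this choice the reconstruction target $\mathbf{X}_v$ is completely determined by the multiset of neighbor embeddings $\{\mathbf{X}_u : u \in \mathcal{N}(v)\} = \{(C(u), F(u)) : u \in \mathcal{N}(v)\}$: the first coordinate is recovered by the hypothesis $C(v) = \sum_{u \in \mathcal{N}(v)} F(u) C(u)$, a weighted sum of neighbor coordinates, and the second by $F(v) = f(\{F(u) : u \in \mathcal{N}(v)\})$. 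Both are computable functions of the neighbor embeddings---the first because multiplication and summation are computable, the second by the standing assumption that $f$ is computable---so their concatenation defines a single computable aggregation map $\Phi$ satisfying $\Phi(\{\mathbf{X}_u : u \in \mathcal{N}(v)\}) = \mathbf{X}_v$ for every $v$. Note that because $C$ and $F$ satisfy their recursions globally by hypothesis, these local reconstruction equations are simultaneously consistent.

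Next I would invoke Lemma~\ref{lemma:1} to realize $\Phi$ by the DRNE aggregator. DRNE feeds the neighbor embeddings into the layer-normalized LSTM in degree-sorted order; since this ordering is deterministic and loses no information about the underlying multiset, $\Phi$ may be regarded as a computable function of the sorted input sequence. Lemma~\ref{lemma:1} then guarantees a finite RNN computing it, and as the ln-LSTM is a recurrent network of precisely this type, there is a parameter setting for which $Agg(\{\mathbf{X}_u : u \in \mathcal{N}(v)\}) = \Phi(\{\mathbf{X}_u : u \in \mathcal{N}(v)\}) = \mathbf{X}_v$. With this aggregator every term of $\mathcal{L}_1$ vanishes, so $\mathcal{L}_1 = 0$ attains the global minimum, proving that the centrality-induced embedding is an optimal solution of DRNE.

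The main obstacle is the final bridging step: Lemma~\ref{lemma:1} only asserts existence of a generic RNN computing $\Phi$, whereas the DRNE objective fixes the aggregator to be a sequential ln-LSTM scanning a degree-sorted sequence. I would handle the two coordinates separately to make the realization concrete---the weighted sum $\sum_u F(u)C(u)$ is an accumulation that the LSTM cell state can carry exactly as it reads the sequence, while computing $f$ of the neighbor multiset is where the universality of Lemma~\ref{lemma:1} genuinely does the work. Care is also needed so that the fixed degree-ordering does not obstruct recovering the order-independent multiset inside the recurrence, and so that the layer-normalization step does not corrupt the exact values being propagated; both are technical points rather than genuine difficulties, since any finite target function of the sorted sequence falls within the scope of Lemma~\ref{lemma:1}.
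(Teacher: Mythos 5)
Your proposal is correct and follows essentially the same route as the paper: both pack the pair $(F(v),C(v))$ into the node embedding, realize the weighted sum $\sum_{u\in\mathcal{N}(v)}F(u)C(u)$ exactly as an accumulation in the LSTM cell state, invoke Lemma~\ref{lemma:1} to realize $f$ for the other coordinate, and conclude that this embedding is a fixed point of the aggregation, hence an optimal solution. Your explicit observation that $\mathcal{L}_1 \geq 0$ makes a zero-loss configuration globally optimal is left implicit in the paper, but the substance of the two arguments is identical.
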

	\begin{proof}
		For brevity, let us suppose that LSTM takes linear activation for all the activation function. This lemma is proved by showing that there exists a parameter setting $\{\mathbf{W}_f, \mathbf{W}_i, \mathbf{W}_o, \mathbf{W}_C, \mathbf{b}_f, \mathbf{b}_i, \mathbf{b}_o, \mathbf{b}_C\}$ in Eq. \eqref{equ:lstm:f}, \eqref{equ:lstm:i}, \eqref{equ:lstm:o}, \eqref{equ:lstm:tC} such that the node embedding $\mathbf{X}_u = [F(u), C(u)]$ is a fixed point. In fact, this parameter settings can be directly constructed. Suppose $\mathbf{W}_{a, i}$ denotes the i-th row of $\mathbf{W}_a$. With the input sequence $\{[F(u), C(u)], u\in\mathcal{N}(v) \}$, set $\mathbf{W}_{f,2}$ and $\mathbf{W}_{o,2}$ as $[0, 0]$, $\mathbf{W}_{i,2}$ as $[1, 0]$, $\mathbf{W}_{C,2}$ as $[0, 1]$, $\mathbf{b}_{f,2}$ and $\mathbf{b}_{o,2}$ as $1$, $\mathbf{b}_{i,2}$ and $\mathbf{b}_{C,2}$as $0$, then we can easily get $\mathbf{h}_{t,2} = \mathbf{o}_{f,2}*\mathbf{C}_{t,2} = \mathbf{C}_{t,2} = \mathbf{f}_{t,2}*\mathbf{C}_{t-1,2}+\mathbf{i}_{t,2}*\tilde{\mathbf{C}}_{t,2} = \mathbf{C}_{t-1,2}+F(t)*C(t)$. Hence, $\mathbf{h}_{T,2}=\sum_{u \in \mathcal{N}(v)}F(u)C(u) = C(v)$ where $T$ is the length of the input sequence. Additionally, by Lemma \ref{lemma:1}, there exists a parameter setting $\{\mathbf{W}_f', \mathbf{W}_i', \mathbf{W}_o', \mathbf{W}_C', \mathbf{b}_f', \mathbf{b}_i', \mathbf{b}_o', \mathbf{b}_C'\}$ to approximate $f$. By set $\mathbf{W}_{f,1}$ as $[\mathbf{W}_f', 0]$, $\mathbf{W}_{o, 1}$ as $[\mathbf{W}_o', 0]$ and so on, we can get that $\mathbf{h}_{T,1} = f(\{F(u), u\in\mathcal{N}(v)\}) = F(v)$. Therefore $\mathbf{h}_T = [F(v), C(v)]$ and the node embedding $\mathbf{X}_v = [F(v), C(v)]$ is a fixed point. The proof is now completed.
	\end{proof}
	
	We can easily conclude that eigenvector centrality, PageRank centrality, degree centrality satisfy the condition of Theorem~\ref{the:2} by the definitions of centralities in Table \ref{tab:theory} with $(F(v), f(\{x_i\}))$, completing the proof for Theorem~\ref{the:main}.
	
\begin{table}
\centering
\small
	\caption{Definition of centralities. }
	\label{tab:theory}	
		\begin{tabular}{|c|c|c|c|}
			\toprule
			Centrality & Definition $C(v)$ & $F(v)$ & $f(\{x_i\})$ \\
			\midrule
			Eigenvector & $1/\lambda*\sum_{u\in \mathcal{N}(v)}C(u)$ & $1/\lambda$ & mean \\
			\midrule
			PageRank & $\sum_{u\in \mathcal{N}(v)}1/d_u*C(u)$ & $1/d_v$ & $1/(\sum{I(x_i)})$ \\
			\midrule
			Degree & $d_v=\sum_{u\in \mathcal{N}(v)} I(d_u)$ & $1/d_v$  & $1/(\sum{I(x_i)})$ \\
			\bottomrule
	\end{tabular}
\end{table}
	
	Based on Theorem~\ref{the:main}, such a parameter setting of DRNE exists for any graph that the resulted embeddings are able to be one of the three centralities. This shows such expressive power of DRNE that different aspects of regular-equivalence-related network structural information are captured.
	
\subsubsection{Analysis and Discussions}
This section presents the out-of-sample extension and the complexity analysis.

\paragraph{Out-of-sample Extension}
	For a node $v$ newly arrived, we can feed the embeddings of its neighbors directly into the aggregating function to get the aggregated representation, i.e., the embedding of the new node through Eq.~\eqref{equ:rec}, if we know its connections to the existing nodes. Such a procedure has a complexity of $O(d_vk)$, where $d_v$ stands for the degree of node $v$ and $k$ represents the length of embeddings and . 
	
\paragraph{Complexity Analysis}
	For a single node $v$ in each iteration during the training procedure, the complexity of gradients calculation and parameters updating is $O(d_vk^2)$, where $k$ stands for the length of embeddings abd $d_v$ represents the degree of node $v$. The aforementioned sampling process keeps $d_v$ from exceeding the bound $S$. Therefore, the total training complexity is $O(|V|Sk^2I)$ where $I$ stands for the number of iterations. $k$, the length of embeddings, usually takes a small number (e.g. 32, 64, 128). The upper bound $S$ is 300 in DRNE. The number of iterations $I$ normally takes a small number which is independent with $|V|$. Hence, the total time complexity of training procedure is actually linear to the number of nodes $|V|$.
\section{Structure Preserving Hyper Network Embedding}
\label{sec:DHNE_introduction}

Conventional pairwise networks are the scenarios of most network embedding methods, where each edge connects only a pair of nodes. Nonetheless, the relationships among data objects are much more complicated in real world applications, and they typically go beyond pairwise. For instance, Jack purchasing a coat with nylon material forms a high-order relationship $\langle$Jack, coat, nylon$\rangle$. 
A network designed to capture this kind of high-order node relationship is often known as a hyper-network.


For purpose of analyzing hyper-network, expanding them into traditional pairwise networks and then applying the analytical pairwise-network-based algorithms is a typical idea. Star expansion~\cite{agarwal2006higher} (Figure~\ref{fig:hypergraph} (d)) and clique expansion~\cite{sun2008hypergraph} (Figure~\ref{fig:hypergraph} (c)) are two representative techniques of this category. For star expansion, a hypergraph is changed into a bipartite graph where each hyperedge is represented by an instance node linking to the original nodes contained by it. For clique expansion, a hyperedge is expanded as a clique. 
These methods make an assumption that the hyperedges are {\em decomposable} either implicitly or explicitly. In other words, if a set of nodes is treated as a hyperedge, then any subset of nodes contained by this hyperedge can constitute another hyperedge. This assumption is reasonable in homogeneous hyper-networks, because the constitution of hyperedges are caused by the latent similarity among the concerned objects, e.g. common labels, in most cases. Nonetheless, when it comes to the heterogeneous hyper-network embedding, it is essential to resolve the new demand as follows.

\begin{enumerate}
	\item \textbf{Indecomposablity}: In heterogeneous hyper-networks, the hyperedges are often indecomposable. In the circumstances, a node set in a hyperedge has a strong inner relationship, whereas the nodes in its subset does not necessarily have. As an instance, in a recommendation system which has $\langle$user, movie, tag$\rangle$ relationships, the relationships of $\langle$user, tag$\rangle$ are often not strong. This phenomenon suggests that using those traditional expansion methods to simply decompose hyperedges does not make sense.
	
	\item \textbf{Structure Preserving}:
	The observed relationships in network embedding can easily preserve local structures. Nevertheless, many existing relationships are not observed owing to the sparsity of networks, when preserving hyper-network structures with only local structures is not sufficient. Some global structures like the neighborhood structures are employed to address this problem. Thus, how to simultaneously capture and preserve both global and local structures in hyper-networks still remains an open problem.
\end{enumerate}


Tu~\etal~\cite{tu2018structural} put forward a deep hyper-network embedding (DHNE) model to deal with these challenges.
To resolve the \textbf{Indecomposablity} issue, an indecomposable tuplewise similarity function is designed. The function is straightforward defined over the universal set of the nodes contained by a hyperedge, ensuring that the subsets of it are not contained in network embedding process. They provide theoretical proof that the indecomposable tuplewise similarity function cannot be linear. Consequently, they implement the tuplewise similarity function as a deep neural network with a non-linear activation function added, making it highly non-linear. To resolve the \textbf{Structure Preserving} issue, a deep autoencoder is designed to learn node representations by reconstructing neighborhood structure, making sure that the nodes which have alike neighborhood structures also have alike embeddings. To simultaneously address the two issues, the deep autoencoder are jointly optimized with the tuplewise similarity function.


\begin{figure}
	\centering
	\includegraphics[width=0.8\linewidth]{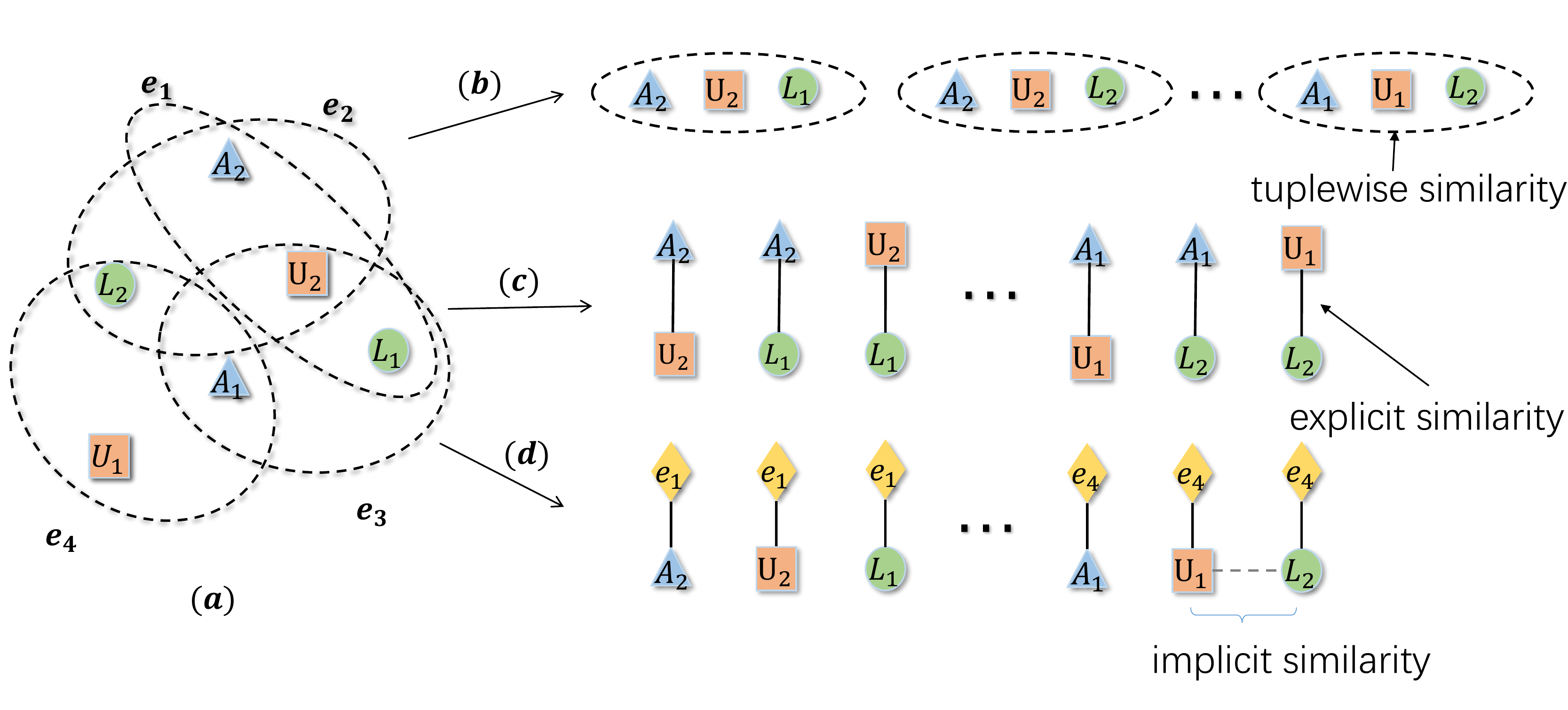}
	\caption{\small (a) One example of a hyper-network. (b) DHNE. (c) The clique expansion. (d) The star expansion. DHNE models the hyperedge by and large, preserving the tuplewise similarity. In the case of clique expansion, each hyperedge is expanded into a clique, where each node pair is explicitly similar. In the case of the star expansion, each node of a hyperedge is linked to a new node representing the origin hyperedge, each node pair of which is implicitly similar in that they are linked to the same node. Figure from~\cite{tu2018structural}.}
	\label{fig:hypergraph}
\end{figure}

\subsection{Notations and Definitions}
The key notations used by DHNE is illustrated in Table~\ref{tab:notations}. 

\begin{table}
	\caption{Notations.}
	\label{tab:notations}
    \centering
    \small
	\begin{tabular}{|c|c|}
		\toprule
		Symbols & Meaning \\
		\midrule
		$T$ & number of node types \\
		$\mathbf{V} = \{\mathbf{V}_t\}_{t=1}^T$ & node set \\
		$\mathbf{E} = \{(\emph{v}_1, \emph{v}_2, ..., \emph{v}_{n_i})\}$ & hyperedge set \\ 
		$\mathbf{A}$ &  adjacency matrix of hyper-network \\
		$\mathbf{X}_i^{\emph{j}}$ &  embedding of node $i$ with type $\emph{j}$ \\
		$\mathcal{S}(\mathbf{X}_1, \mathbf{X}_2,..., \mathbf{X}_N)$ & $N$-tuplewise similarity function \\
		$\mathbf{W}^{(i)}_j$ & the i-th layer weight matrix with type j \\
		$\mathbf{b}^{(i)}_j$ & the i-th layer biases with type j \\
		\bottomrule
	\end{tabular}
\end{table}

\begin{definition}[Hyper-network]
	One \textbf{hyper-network} is a hypergraph $\mathbf{G} = (\mathbf{V}, \mathbf{E})$ where the set of nodes $V$ belongs to $T$ types $\mathbf{V} = \{\mathbf{V}_t\}_{t=1}^T$ and each hyperedge of the set of edges $\mathbf{E}$ may have more than two nodes $\mathbf{E} = \{\emph{E}_i=(\emph{v}_1, \emph{v}_2, ..., \emph{v}_{n_i})\} (n_i \geq 2)$. The hyper-network degenerates to a network when each hyperedge has only 2 nodes. The definition of the type of edge $\emph{E}_i$ is the combination of types of all the nodes in the edge. If $T \geq 2$, we define the hyper-network as \textbf{heterogeneous hyper-network}.
\end{definition}

In order to obtain embedding in a hyper-network, it is necessary to preserve an indecomposable tuplewise relationship. The authors give a definition to the indecomposable structures as the first-order proximity of the hyper-network.

\begin{definition}[The First-order Proximity of Hyper-network]
	\label{def:first-order}
	\textbf{The first-order proximity} of a hyper-network measures the N-tuplewise similarity between nodes. If there exists a hyperedge among any $N$ vertexes $\emph{v}_1, \emph{v}_2, ..., \emph{v}_{N}$, the first-order proximity of these $N$ vertexes is defined as 1. Note that this implies no first-order proximity for any subsets of these N vertexes.
\end{definition}

In the real world, the first-order proximity suggests the indecomposable similarity among several entities. Moreover, real world networks are always sparse and incomplete, thus it is not sufficient to only consider first-order proximity for learning node embeddings. To address this issue, we need to consider higher order proximity. To capture the global structure, the authors then propose the definition of the second-order proximity of hyper-networks.

\begin{definition}[The Second-order Proximity of Hyper-network]
	\textbf{The second-order Proximity} of a hyper-network measures the proximity of two nodes concerning their neighborhood structures. For any node $\emph{v}_i \in \emph{E}_i$, $\emph{E}_i/{\emph{v}_i}$ is defined as a \textbf{neighborhood} of $\emph{v}_i$. If $\emph{v}_i$'s neighborhoods $\{\emph{E}_i/{\emph{v}_i} ~for~any~\emph{v}_i \in \emph{E}_i\}$ are similar to $\emph{v}_j$'s, then $\emph{v}_i$'s embedding $\mathbf{x}_i$ should be similar to $\emph{v}_j$'s embedding $\mathbf{x}_j$.
\end{definition}

For example, in Figure \ref{fig:hypergraph}(a), $\emph{A}_1$'s neighborhood set is $\{(\emph{L}_2, \emph{U}_1), (\emph{L}_1, \emph{U}_2)\}$ and $\emph{A}_2$'s neighborhood set is $\{(\emph{L}_2, \emph{U}_2), (\emph{L}_1, \emph{U}_2)\}$. Thus $\emph{A}_1$ and $\emph{A}_2$ are second-order similar due to sharing common neighborhood $(\emph{L}_1, \emph{U}_2)$.

\subsection{The DHNE Model}

This section presents the Deep Hyper-Network Embedding (DHNE) model, the framework of which is illustrated in Figure \ref{fig:framework}.

\subsubsection{Loss function}
For purpose of preserving the first-order proximity of hyper-networks, an $N$-tuplewise similarity measure is required in the embedding space. Such a measure should meet the requirement that when a hyperedge exists among $N$ vertexes, the $N$-tuplewise similarity of them is supposed to be large and vice versa.

\begin{property}
	\label{property}
	Let $\mathbf{X}_i$ denote the embedding of node $v_i$ and $\mathcal{S}$ as $N$-tuplewise similarity function.
	\begin{itemize}
		\item if $(\emph{v}_1, \emph{v}_2, ..., \emph{v}_N) \in \mathbf{E}$, $\mathcal{S}(\mathbf{X}_1, \mathbf{X}_2, .., \mathbf{X}_N)$ is supposed to be large (larger than a threshold $l$ without loss of generality). 
		\item if $(\emph{v}_1, \emph{v}_2, ..., \emph{v}_N) \notin \mathbf{E}$, $\mathcal{S}(\mathbf{X}_1, \mathbf{X}_2, .., \mathbf{X}_N)$ is supposed to be small (smaller than a threshold $s$ without loss of generality).
	\end{itemize}
\end{property}

\begin{figure} 
	\centering
	\includegraphics[width=0.8\linewidth]{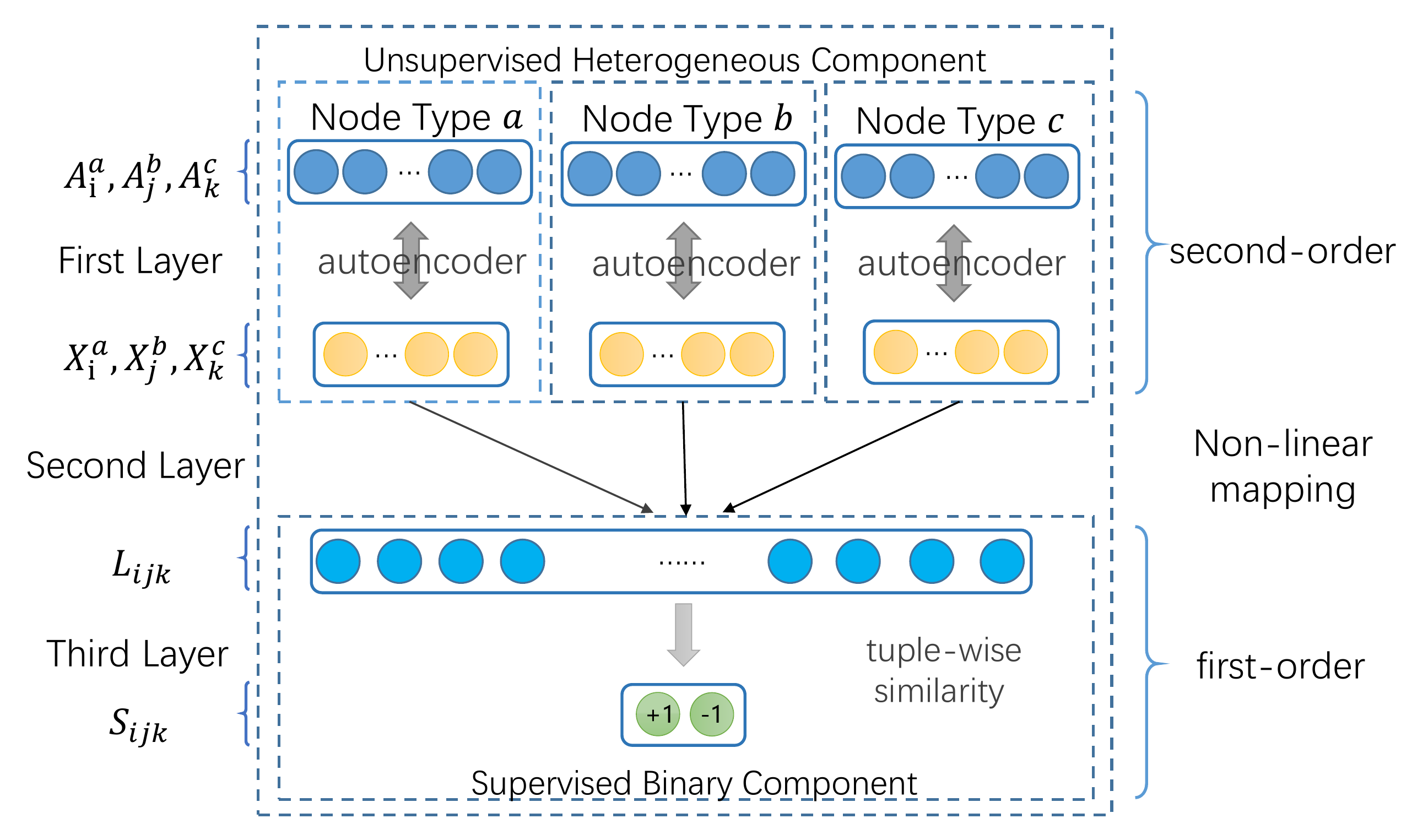}
	\caption{{\small Framework of Deep Hyper-Network Embedding (DHNE), figure from~\cite{tu2018structural}.}}
	\label{fig:framework}
\end{figure}

DHNE employs a data-dependent $N$-tuplewise similarity function and mainly focuses on hyperedges with uniform length $N = 3$, which is not difficult to extend to $N > 3$.
The authors also propose a theorem to show that a linear tuplewise similarity function is not able to satisfy Property \ref{property}.

\begin{theorem}
	Linear function $\mathcal{S}(\mathbf{X}_1, \mathbf{X}_2,..., \mathbf{X}_N) = \sum_i \mathbf{W}_i\mathbf{X}_i$ cannot satisfy Property \ref{property}.
	\label{theorem-dhne}
\end{theorem}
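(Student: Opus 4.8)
The plan is to prove the contrapositive-style impossibility by deriving a contradiction from the assumption that a purely linear similarity function $\mathcal{S}(\mathbf{X}_1,\mathbf{X}_2,\mathbf{X}_3)=\sum_i \mathbf{W}_i\mathbf{X}_i$ can separate hyperedges from non-hyperedges according to Property~\ref{property}. The key observation is that a linear function is \emph{additive}: its value on a triple is completely determined by adding up contributions that each depend on a single node only, with no genuine interaction term. I would exploit this additivity to build, out of valid hyperedges, a configuration of node triples on which the additive score cannot simultaneously respect the large-threshold constraint $l$ for the true hyperedges and the small-threshold constraint $s$ for the false ones.

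First I would set up the core combinatorial gadget. The cleanest route is to select several hyperedges that share nodes in a balanced way, so that summing the linear scores over a carefully chosen multiset of triples causes the per-node weight contributions $\mathbf{W}_i\mathbf{X}_i$ to cancel. Concretely, I would look for node triples $(\mathbf{X}_1^a,\mathbf{X}_2^a,\mathbf{X}_3^a)$ forming hyperedges and a matching set of triples that are \emph{not} hyperedges but use exactly the same multiset of node embeddings in each coordinate slot. Because $\mathcal{S}$ is linear and separates the slots, the total score summed over the ``edge'' triples equals the total score summed over the ``non-edge'' triples, since both sums equal $\sum_{\text{slot }i}\mathbf{W}_i\bigl(\sum_a \mathbf{X}_i^a\bigr)$. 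Then I would invoke Property~\ref{property}: each edge triple scores above $l$, so the left sum is at least (number of triples)$\times l$; each non-edge triple scores below $s<l$, so the right sum is strictly less than (number of triples)$\times l$. Equality of the two sums then contradicts $s<l$.

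The main obstacle will be exhibiting a concrete instance where the \emph{same} multiset of embeddings arises both as a set of hyperedges and as a set of non-hyperedges in each coordinate slot; this is purely a matter of choosing the right small example of overlapping triples, and the existence of such a configuration is what forces the impossibility. I would construct it explicitly: for instance, take two distinct node values $\mathbf{a},\mathbf{b}$ in each of the three type-slots, and consider the triples $(\mathbf{a},\mathbf{a},\mathbf{a})$, $(\mathbf{b},\mathbf{b},\mathbf{b})$ versus the ``swapped'' triples $(\mathbf{a},\mathbf{a},\mathbf{b})$, $(\mathbf{b},\mathbf{b},\mathbf{a})$, arranging that the former pair lies in $\mathbf{E}$ while the latter pair does not (or vice versa) while keeping the per-slot embedding multisets identical, namely $\{\mathbf{a},\mathbf{b}\}$ in slot $1$, $\{\mathbf{a},\mathbf{b}\}$ in slot $2$, and $\{\mathbf{a},\mathbf{b}\}$ in slot $3$ on both sides. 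Verifying that this choice keeps the slot-wise sums equal is a one-line computation once the gadget is fixed.

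Finally I would assemble the contradiction cleanly. Having fixed the gadget, I would write the linear score of each triple, sum over the edge triples and separately over the non-edge triples, note the two sums are identical by additivity, and then derive $2l \le \sum_{\text{edges}} \mathcal{S} = \sum_{\text{non-edges}} \mathcal{S} < 2s$, which is impossible because the thresholds satisfy $s < l$. This establishes that no linear $\mathcal{S}$ can satisfy Property~\ref{property}, and hence motivates the nonlinear (deep) realization of the tuplewise similarity used by DHNE. I would keep the argument generic in $N$ at the end by remarking that the same slot-wise cancellation works for any $N\ge 2$, so the lower-order gadget already suffices and the conclusion is not special to $N=3$.
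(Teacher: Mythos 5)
Your proposal is correct and takes essentially the same approach as the paper: the paper's own proof uses exactly your gadget --- a two-cluster, three-type instance where the hyperedges $(\mathbf{Y}_0^1,\mathbf{Y}_0^2,\mathbf{Y}_0^3)$ and $(\mathbf{Y}_1^1,\mathbf{Y}_1^2,\mathbf{Y}_1^3)$ share their per-slot embeddings with two non-hyperedges obtained by swapping a single coordinate slot --- and then derives the contradiction from linearity and the thresholds $l>s$. Your summed form $2l < \sum_{\mathrm{edges}}\mathcal{S} = \sum_{\mathrm{non\text{-}edges}}\mathcal{S} < 2s$ is precisely the sum of the paper's two pairwise inequalities $\mathbf{W}_1(\mathbf{Y}_0^1-\mathbf{Y}_1^1) > l-s$ and $\mathbf{W}_1(\mathbf{Y}_1^1-\mathbf{Y}_0^1) > l-s$, so the two arguments are equivalent.
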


\begin{proof}
	Using the counter-evidence method, let us presume that theorem~\ref{theorem-dhne} is not true, i.e., the linear function $\mathcal{S}$ satisfies Property~\ref{property}. Consider the counter example with 3 types of nodes where each type has 2 clusters (with the id 1 and 0). Hence, a hyperedge exists iff 3 nodes from different types have the same cluster id. We take the notation of $\mathbf{Y}_i^{j}$ to stand for embeddings of nodes with type $j$ in cluster $i$. We hold the following inequations by Property \ref{property}:
	\begin{align}
		\mathbf{W}_1\mathbf{Y}_0^1+\mathbf{W}_2\mathbf{Y}_0^2+\mathbf{W}_3\mathbf{Y}_0^3 > l \label{equ:ce1}&\\
		\mathbf{W}_1\mathbf{Y}_1^1+\mathbf{W}_2\mathbf{Y}_0^2+\mathbf{W}_3\mathbf{Y}_0^3 < s \label{equ:ce2}&\\
		\mathbf{W}_1\mathbf{Y}_1^1+\mathbf{W}_2\mathbf{Y}_1^2+\mathbf{W}_3\mathbf{Y}_1^3 > l \label{equ:ce3}&\\
		\mathbf{W}_1\mathbf{Y}_1^0+\mathbf{W}_2\mathbf{Y}_1^2+\mathbf{W}_3\mathbf{Y}_1^3 < s \label{equ:ce4}&.
	\end{align}
	By combining Eq.\eqref{equ:ce1}\eqref{equ:ce2}\eqref{equ:ce3}\eqref{equ:ce4}, we get $\mathbf{W}_1*(\mathbf{Y}_0^1-\mathbf{Y}_1^1) > l-s$ and $\mathbf{W}_1*(\mathbf{Y}_1^1-\mathbf{Y}_0^1) > l-s$. This is contradictory to our assumption and thus finishes the proof.
\end{proof}

Theorem~\ref{theorem-dhne} demonstrates that N-tuplewise similarity function $\mathcal{S}$ are supposed to be non-linear, which motivates DHNE to model the similarity by a multilayer perceptron. This contains two parts, illustrated separately in the 2nd layer and 3rd layer of Figure \ref{fig:framework}, where the 2nd layer is a fully connected layer whose activation functions are non-linear. Inputted with the embeddings $(\mathbf{X}_i^\emph{a}, \mathbf{X}_j^\emph{b}, \mathbf{X}_k^\emph{c})$ of 3 nodes $(\emph{v}_i, \emph{v}_j, \emph{v}_k)$, they can be concatenated and mapped non-linearly to a common latent space $\mathbf{L}$ where the joint representation is shown as follows.

\begin{equation}
\mathbf{L}_{ijk} = \sigma(\mathbf{W}^{(2)}_\emph{a}*\mathbf{X}_i^\emph{a}+
\mathbf{W}^{(2)}_\emph{b}*\mathbf{X}_j^\emph{b}+
\mathbf{W}^{(2)}_\emph{c}*\mathbf{X}_k^\emph{c}+\mathbf{b}^{(2)}), 
\end{equation}
where $\sigma$ stands for the sigmoid function.
Finally, the latent representation $\mathbf{L}_{ijk}$ is mapped to a probability space in the 3rd layer to obtain the similarity:

\begin{equation}
\mathbf{S}_{ijk} \equiv \mathcal{S}(\mathbf{X}^\emph{a}_i, \mathbf{X}^\emph{b}_j, \mathbf{X}^\emph{c}_k) = \sigma(\mathbf{W}^{(3)}*\mathbf{L}_{ijk}+\mathbf{b}^{(3)}).
\end{equation}

Hence, the combination of the second and third layers can get a non-linear tuplewise similarity measure function $\mathcal{S}$ as we hoped. For purpose of making $\mathcal{S}$ satisfy Property \ref{property}, we can write the following objective function.

\begin{equation}
\label{equ:first-order}
\mathcal{L}_1 = -(\mathbf{R}_{ijk}\log \mathbf{S}_{ijk}+(1-\mathbf{R}_{ijk})\log(1-\mathbf{S}_{ijk})),
\end{equation}
where $\mathbf{R}_{ijk}$ is defined as $1$ if there is a hyperedge between $\emph{v}_i$, $\emph{v}_j$ and $\emph{v}_k$ and $0$ otherwise. According to the objective function, it is not difficult to point out that if $\mathbf{R}_{ijk} = 1$, the similarity $\mathbf{S}_{ijk}$ is supposed to be large and vice versa. That is to say, the first-order proximity is successfully preserved. 

The design of the first layer in Figure \ref{fig:framework} aims to preserve the second-order proximity, which measures the similarity of neighborhood structures. Here, to characterize the neighborhood structure, the authors define the adjacency matrix of hyper-network. Specifically, they define a $|\mathbf{V}|*|\mathbf{E}|$ incidence matrix $\mathbf{H}$ with elements $\mathbf{h}(\emph{v}, \emph{e}) = 1$ if $\emph{v} \in \emph{e}$ and 0 otherwise to denote a hypergraph $\mathbf{G} = (\mathbf{V}, \mathbf{E})$. $d(v) = \sum_{\emph{e} \in \mathbf{E}} \mathbf{h}(\emph{v}, \emph{e})$ stands for the degree of a vertex $\emph{v} \in \mathbf{V}$. Let $\mathbf{D}_v$ stand for the diagonal matrix containing the vertex degree, then the adjacency matrix $\mathbf{A}$ of hypergraph $\mathbf{G}$ can be defined as $\mathbf{A} = \mathbf{H}\mathbf{H}^T-\mathbf{D}_v$, where $\mathbf{H}^T$ is the transpose of $\mathbf{H}$. Here, each element of adjacency matrix $\mathbf{A}$ stands for the concurrent times between two nodes, while the i-th row of $\mathbf{A}$ demonstrates the neighborhood structure of vertex $\emph{v}_i$. To make best of this information, DHNE utilizes an autoencoder~\cite{lecun2015deep} model to preserve the neighborhood structure and feeds it with an adjacency matrix $\mathbf{A}$ as the input feature. The autoencoder consists of two non-linear mapping: an encoder and a decoder, where the encoder maps from feature space $\mathbf{A}$ to latent representation space $\mathbf{X}$, while the decoder from latent representation $\mathbf{X}$ space back to origin feature space $\hat{\mathbf{A}}$, which can be shown as follows.
\begin{align}
	\mathbf{X}_i &= \sigma(\mathbf{W}^{(1)}*\mathbf{A}_i+\mathbf{b}^{(1)}) \label{equ:encoder} \\ 
	\hat{\mathbf{A}}_i &= \sigma(\hat{\mathbf{W}}^{(1)}*\mathbf{X}_i+\hat{\mathbf{b}}^{(1)}). \label{equ:decoder}
\end{align}

The aim of autoencoder is to minimize the reconstruction error between the output and the input, with which process it will give similar latent representations to the nodes with similar neighborhoods, preserving the second-order proximity consequently. Note that the input feature, the adjacency matrix of the hyper-network, is often extremely sparse. To achieve a higher efficiency, DHNE only reconstructs non-zero elements in the adjacency matrix. The following equantion shows the reconstruction error:
\begin{equation}
	||sign(\mathbf{A}_i)\odot(\mathbf{A}_i - \hat{\mathbf{A}}_i)||_F^2,
\end{equation}
where $sign$ stands for the sign function.

Additionally, in a heterogeneous hyper-network, the vertexes usually have various types, the distinct characteristics of which require the model to learn a unique latent space for each of them. Motivated by this idea, DHNE provides each heterogeneous type of entities with an autoencoder model of their own, as is demonstrated in Figure \ref{fig:framework}. The definition of loss function for all types of nodes is as follows.

\begin{equation}
\label{equ:second-order}
\mathcal{L}_2 = \sum_{t} ||sign(\mathbf{A}_i^t)\odot(\mathbf{A}_i^t - \hat{\mathbf{A}}_i^t)||_F^2,
\end{equation}
where t is the index for node types.

For purpose of simultaneously preserving first-order proximity and second-order proximity for a heterogeneous hyper-network, DHNE jointly minimizes the loss function via blending Eq.\eqref{equ:first-order} and Eq.\eqref{equ:second-order}:

\begin{equation}
\label{equ:obj}
\mathcal{L} = \mathcal{L}_1+\alpha\mathcal{L}_2.
\end{equation}

\subsubsection{Optimization}
DHNE adopts stochastic gradient descent (SGD) for optimization, the critical step of which is to compute the partial derivative of parameters $\theta = \{\mathbf{W}^{(i)}, \mathbf{b}^{(i)}, \mathbf{\hat{W}}^{(i)}, \mathbf{\hat{b}}^{(i)}\}_{i=1}^3$. By back-propagation algorithm~\cite{lecun2015deep}, these derivatives can be easily estimated. Note that in most real world networks, there exist only positive relationships, so that the iterative process may degenerate to trivial convergence where all the tuplewise relationships turn out to be similar. In order to resolve this issue, DHNE samples multiple negative edges with the help of a noisy distribution for each edge~\cite{mikolov2013distributed}. The whole algorithm is demonstrated in Algorithm~\ref{algorithm}.

\begin{algorithm}[!t]
	\caption{The Deep Hyper-Network Embedding (DHNE)}
	\label{algorithm}
	\begin{algorithmic}[1]
		\Require the hyper-network $\mathbf{G} = (\mathbf{V}, \mathbf{E})$, the adjacency matrix $\mathbf{A}$ and the parameter $\alpha$
		\Ensure Hyper-network Embeddings $E$ and updated Parameters $\theta = \{\mathbf{W}^{(i)}, \mathbf{b}^{(i)}, \mathbf{\hat{W}}^{(i)}, \mathbf{\hat{b}}^{(i)}\}_{i=1}^3$
		\State Initialize parameters $\theta$ randomly
		\While{the value of objective function has not converged} 
		\State Generate the next batch from the set of hyperedges $\mathbf{E}$
		\State Sample negative hyperedge in a random way
		\State Compute partial derivative $\partial\mathcal{L}/\partial \theta$ with back-propagation to update $\theta$.
		\EndWhile 
	\end{algorithmic}
\end{algorithm}

\subsubsection{Analysis and Discussions}
This section presents the out-of-sample extension and the complexity analysis.
\paragraph{Out-of-sample extension}
For any new vertex $\emph{v}$, it is easy to obtain the adjacency vector by this vertex's connections to other existing vertexes. Hence, the out-of-sample extension problem can be solved by feeding the new vertex $v$'s adjacency vector into the specific autoencoder corresponded with its type and applying Eq.\eqref{equ:encoder} to get its latent representation in embedding space. The time complexity for these steps is $\mathcal{O}(dd_v)$, where $d$ stands for the dimensionality of the embedding space and $d_v$ is the degree of vertex $\emph{v}$.
\paragraph{Complexity analysis}
The complexity of gradients calculation and parameters updating in the training procedure is $O((nd+dl+l)bI)$, where $n$ stands for the number of nodes, $d$ represents the dimension of embedding vectors, $l$ stands for the size of latent layer, $b$ stands for the batch size and $I$ represents the number of iterations. The parameter $l$ is usually correlated with $d$, but independent on $n$ and $I$ also has no connection with $n$. $b$ is normally small. Hence, the time complexity of the training procedure is actually linear to the number of vertexes $n$.

\section*{\it{\LARGE{Deep Property-oriented Methods}}}

\section{Uncertainty-aware Network Embedding}
\label{sec_DVHE_introduction}

Usually, real-world networks, the constitution and evolution of which are full of uncertainties, can be much more sophisticated than we expect. There are many reasons resulting in such uncertainties. For instance, low-degree nodes in a network fail to provide enough information and hence the representations of them are more uncertain than others. For those nodes sharing numerous communities, the potential contradiction among its neighbors might also be larger than others, resulting in uncertainty. 
    Moreover, in social networks, human behaviors are sophisticated, making the generation of edges also uncertain \cite{zang2017long}. Therefore without considering the uncertainties in networks, the information of nodes may become incomplete in latent space, which makes the representations less effective for network analysis and inference.
    Nonetheless, previous work on network embedding mainly represents each node as a single point in lower-dimensional continuous vector space,
    which has a crucial limitation that it can not capture the uncertainty of the nodes.
    Given that the family of Gaussian methods are capable of innately modeling uncertainties \cite{vilnis2014word} and provide various distance functions per object,
    it will be promising to represent a node with a Gaussian distribution so that the characteristics of uncertainties can be incorporated.
    
    As such, to capture the uncertainty of each node during the process of network embedding with Gaussian process, there are several basic requirements.
    First, to preserve the transitivity in networks, the embedding space is supposed to be a metric space.
    Transitivity here is a significant property for networks, peculiarly social networks \cite{holland1972holland}.
    For instance, the possibility of a friend of my friend becoming my friend is much larger than that of a person randomly chosen from the crowd.
    Moreover, the transitivity measures the density of loops of length three (triangles) in networks, crucial for computing clustering coefficient and related attributes \cite{chen2015fast}.
    Importantly, the transitivity in networks can be preserved well on condition that the triangle inequality is satisfied by the metric space.
    Second, the uncertainties of nodes should be characterized by the variance terms so that these uncertainties can be well captured, which means that the variance terms should be explicitly related to mean vectors.
    In other words, the proximities of nodes are supposed to be captured by mean vectors, while the uncertainties of nodes are supposed to be modeled by variance terms.
    Third, network structures such as high-order proximity, which can be used in abundant real-world applications as shown in \cite{ou2016asymmetric}, are also supposed to be preserved effectively and efficiently.

    Zhu~\etal~\cite{zhu2018deep} propose a deep variational model, called \dvnebold, which satisfies the above requirements and learns the Gaussian embedding in the Wasserstein space.
    Wasserstein space \cite{courty2017learning} is a metric space where the learned representations are able to preserve the transitivity for networks well.
    Specifically, the similarity measure is defined as Wasserstein distance, a formidable tool based on the optimal transport theory for comparing data
    distributions with wide applications such as computer vision \cite{bonneel2015sliced} and machine learning \cite{kolouri2017optimal}.
    Moreover, the Wasserstein distance enables the fast computation for Gaussian distributions \cite{givens1984class},
    taking linear time complexity to calculate the similarity between two node representations.
    Meanwhile, they use a variant of Wasserstein autoencoder (WAE) \cite{tolstikhin2017wasserstein} to reflect the relationships between variance and mean terms,
    where WAE is a deep variational model which has the goal of minimizing the Wasserstein distance between the original data distribution and the predicted one.
    In general, via preserving the first-order and second-order proximity, the learned representations by DVNE is capable of well capturing the local and global network structures \cite{wang2016structural,tang2015line} .
    
\subsection{Notations}
    $\mathbf{G}=\{\mathbf{V},\mathbf{E}\}$ stands for a network, where $\mathbf{V}=\{v_1,v_2,...,v_N\}$ is a set of nodes and $N$ is the number of them.
    The set of edges between nodes is denoted as $\mathbf{E}$, where $M = |\mathbf{E}|$ is the number of them.
    Let $\mathbf{Nbrs}_i = \{v_j | (v_i, v_j) \in \mathbf{E} \}$ stand for the neighbors of $v_i$.
    The transition matrix is denoted as $\mathbf{P} \in \mathbb{R}^{N{\times}N}$, where $\mathbf{P}(:,j)$ and $\mathbf{P}(i,:)$ denote its $j^{th}$ column and $i^{th}$ row respectively. $\mathbf{P}(i,j)$ stands for the element at the $i^{th}$ row and $j^{th}$ column.
    Given that an edge links $v_i$ to $v_j$ and node degree of $v_i$ is $d_i$, we set $\mathbf{P}(i,j)$ to $\frac{1}{d_i}$ and zero otherwise.
    Then, $\mathbf{h}_i = \mathcal{N}(\mathbf{\mu}_i,\mathbf{\Sigma}_i)$ is defined as a lower-dimensional Gaussian distribution embedding for node $v_i$, where $\mu_i \in \mathbb{R}^{L}$, $\Sigma_i \in \mathbb{R}^{L{\times}L}$.
    $L$ stands for the embedding dimension, satisfying $L \ll N$.

\subsection{The DVNE Model}
    The DVNE model proposed by Zhu~\etal~\cite{zhu2018deep} is discussed in this section. The framework of DVNE is shown in Figure \ref{fig:fw}.

    \begin{figure}
		\centering
        \includegraphics[width=0.8\linewidth]{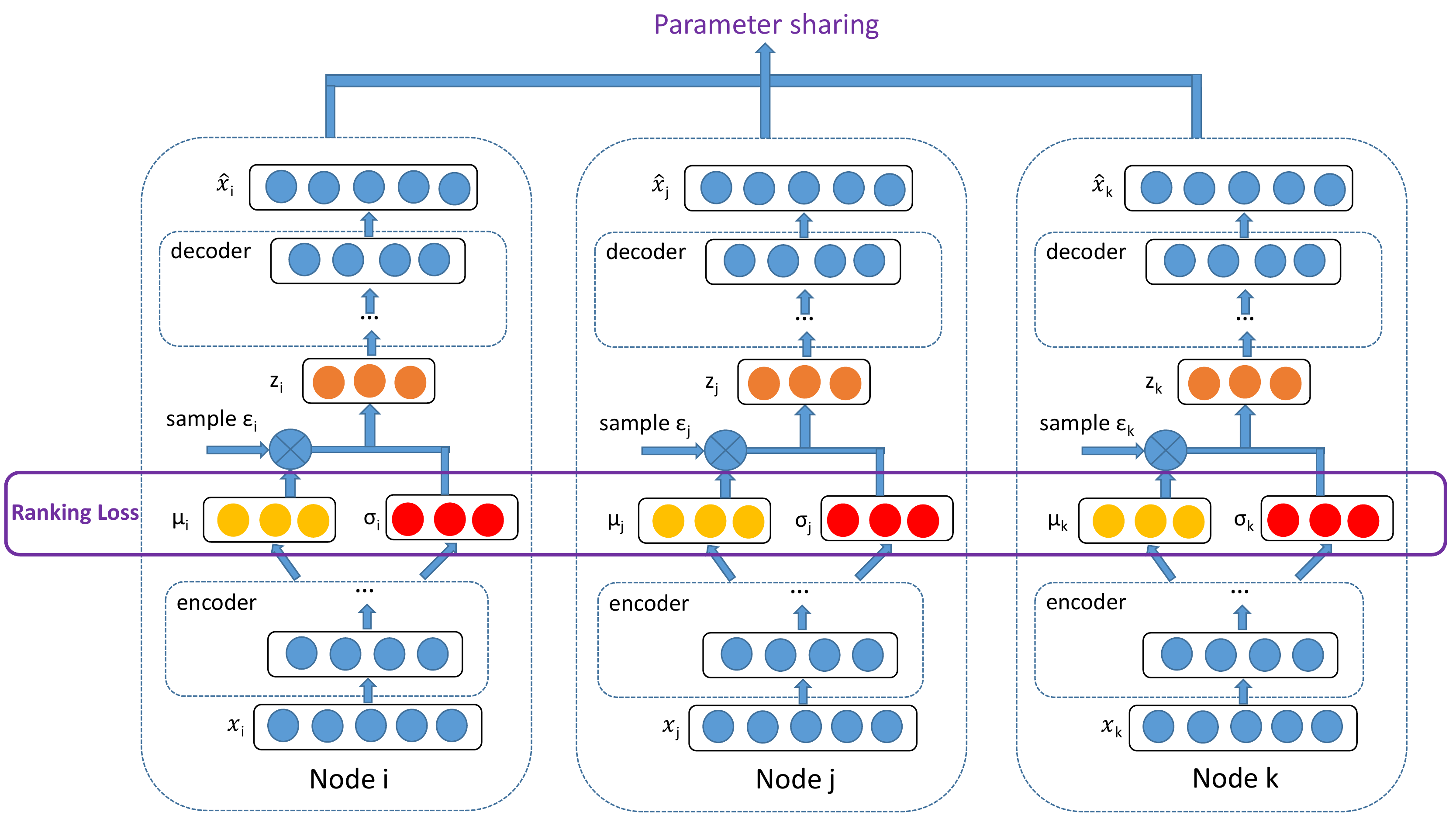}
        \caption{The framework of DVNE,figure from~\cite{zhu2018deep}.}
        \label{fig:fw}
    \end{figure}

\subsubsection{Similarity Measure}
    For purpose of supporting network applications, a suitable similarity measure need to be defined between two node latent representations.
    In DVNE, distributions are adopted to model latent representations, thus the similarity measure here is supposed to be capable of measuring the similarities among different distributions.
    Moreover, the similarity measure is supposed to also simultaneously preserve the transitivity among nodes, since it is a crucial property of networks.
    Wasserstein distance is such an adequate candidate capable of measuring the similarity between two distributions and satisfying the triangle inequality simultaneously\cite{clement2008elementary}, which guarantees its capability of preserving the transitivity of similarities among nodes.

    The definition of the $p^{th}$ Wasserstein distance between two probability measures $\mathcal{\nu}$ and $\mathcal{\mu}$ is
    \begin{equation} \label{eq_wp}
    W_{p} (\nu, \mu)^{p} = \inf \mathbb{E} \big[ d( X , Y )^{p} \big],
    \end{equation}
    where $\mathbb{E}[Z]$ stands for the expectation of a random variable $Z$ and $\inf$ stands for the infimum taken over all the joint distributions of the random variables $X$ and $Y$, the marginals of which are $\nu$ and $\mu$ respectively.
    
    Furthermore, it has been proved that the $p^{th}$ Wasserstein distance can preserve all properties of a metric when $p \geq 1$ \cite{ambrosio2008gradient}.
    The metric should satisfy the non-negativity, the symmetry, the identity of indiscernibles and the triangle inequality \cite{bryant1985metric}.
    In such ways, Wasserstein distance meets the requirement of being a similarity measure for the latent node representations, peculiarly for an undirected network.
    
    Although the computational cost of general-formed Wasserstein distance, which causes the limitation, a closed form solution can be achieved with the $2^{th}$ Wasserstein distance (abbreviated as $W_2$) since Gaussian distributions are used in DVNE
    for the latent node representations. This greatly reduces the computational cost.

    More specifically, DVNE employs the following formula to calculate $W_2$ distance between two Gaussian distributions \cite{givens1984class}:
    \begin{equation} \label{eq_w2}
    \begin{aligned}
         &{dist=W_2(\mathcal{N}(\mu_1,\Sigma_1),\mathcal{N}(\mu_2,\Sigma_2))}\\
         &dist^2=\Vert \mu_1-\mu_2\Vert_2^2 +\mathrm{Tr}(\Sigma_1+\Sigma_2-2(\Sigma_1^{1/2}\Sigma_2\Sigma_1^{1/2})^{1/2})
    \end{aligned}
    \end{equation}

    Furthermore, the $W_2$ distance (a.k.a root mean square bipartite matching distance) has been popularly applied in
    computer graphics \cite{bonneel2011displacement, de2012blue}, computer vision \cite{bonneel2015sliced, courty2017learning} and machine learning \cite{courty2017optimal,cuturi2014fast}, etc.
    DVNE adopts diagonal covariance matrices \footnote{When the covariance matrices are not diagonal, Wang~\etal propose a fast iterative algorithm (i.e., BADMM) to solve the Wasserstein distance \cite{wang2014bregman}.}, thus ${\Sigma_1\Sigma_2=\Sigma_2\Sigma_1}$.
    In the commutative case, the formula \eqref{eq_w2} can be simplified as
    \begin{equation} \label{eq_w2s}
    W_2\big(\mathcal{N}(\mu_1,\Sigma_1);\mathcal{N}(\mu_2,\Sigma_2)\big)^2 =\Vert \mu_1-\mu_2\Vert_2^2 +\Vert\Sigma_1^{1/2}-\Sigma_2^{1/2}\Vert_{F}^2.
    \end{equation}

    According to the above equation, the time complexity of computing $W_2$ distance between two nodes in latent embedding space is linear to $L$, the dimension of embedding.

\subsubsection{Loss Functions}
    First, the first-order proximity needs to be preserved.
    Intuitively, each node connected with $v_i$ is supposed to be of smaller distance to $v_i$ in the embedding space compared to the nodes that have no edges linking $v_i$.
    More specifically, to preserve the first-order proximity, the following pairwise constraints is satisfied by DVNE:
    \begin{equation} \label{eq_s1}
        {W_2}(\mathbf{h}_i, \mathbf{h}_j) <  {W_2}(\mathbf{h}_i, \mathbf{h}_k), \forall{v_i} \in \mathbf{V}, \forall v_j \in \mathbf{Nbrs}_i, \forall v_k \notin \mathbf{Nbrs}_i.
    \end{equation}
    The smaller the $W_2$ distance, the more similar between nodes.
    An energy based learning approach \cite{lecun2006tutorial} is used here to satisfy all pairwise constraints which are defined above.
    The following equation presents the mathematical objective function, with $W_2(\mathbf{h}_i, \mathbf{h}_j)$ standing for the energy between two nodes, $E_{ij} = W_2(\mathbf{h}_i, \mathbf{h}_j)$.
    \begin{equation} \label{eq_l1}
        \mathcal{L}_1 =  \sum_{(i,j,k) \in \mathbf{D}} ({E_{ij}}^2 + exp(-E_{ik})),
    \end{equation}
    where $\mathbf{D}$ stands for the set of all valid triplets given in Eq.\eqref{eq_s1}.
    Therefore, ranking errors are penalized by the energy of the pairs in this objective function, making the energy of negative examples be higher than that of positive examples.

    In order to preserve second-order proximity, transition matrix $\mathbf{P}$ is adopted as the input feature of Wasserstein Auto-Encoders (WAE) \cite{tolstikhin2017wasserstein} to preserve the neighborhood structure and the mathematical relevance of mean vectors and variance terms is also implied.
    More specifically, $\mathbf{P}(i,:)$ demonstrates the neighborhood structure of node $v_i$, and is adopted as the input feature for node $v_i$ to preserve its neighborhood structure.
    The objective of WAE contains the reconstruction loss and the regularizer, where the former loss helps to preserve neighborhood structure and latter guides the encoded training distribution to match the prior distribution.
    Let $P_X$ denotes the data distribution, and $P_G$ denotes the encoded training distribution,
    then the goal of WAE is minimizing Wasserstein distance between $P_X$ and $P_G$.
    The reconstruction cost can be written as
    \begin{equation} \label{eq_reswae}
        D_{WAE}(P_X, P_G) = \inf_{Q(Z|X) \in Q} \mathbb{E}_{P_X} \mathbb{E}_{Q(Z|X)} \big[c(X,G(Z)) \big],
    \end{equation}
    where $Q$ stands for the encoders and $G$ represents the decoders, $X \sim P_X$ and $Z \sim Q(Z|X)$.
    According to \cite{tolstikhin2017wasserstein}, Eq.\eqref{eq_reswae} minimizes the $W_2$ distance between $P_X$ and $P_G$ with $c(x, y) = \Vert x - y \Vert^2_2$.
    Taking the sparsity of transition matrix $\mathbf{P}$ into consideration, DVNE is centered on non-zero elements in $\mathbf{P}$ to accelerate the training process.
    Therefore, the loss function for preserving the second-order proximity can be defined as follows.
    \begin{equation} \label{eq_l2}
        \mathcal{L}_2 = \inf_{Q(Z|X) \in Q} \mathbb{E}_{P_X} \mathbb{E}_{Q(Z|X)} \big[ \Vert X {\circ} (X - G(Z)) \Vert^2_2 \big],
    \end{equation}
    where ${\circ}$ denotes the element-wise multiplication.
    The transition matrix $\mathbf{P}$ is used as the input feature X in DVNE.
    The second-order proximity is then preserved by the reconstruction process through forcing nodes with similar neighborhoods to have similar latent representations.

    For purpose of simultaneously preserving the first-order proximity and second-order proximity of networks, DVNE jointly minimizes the loss function of Eq.\eqref{eq_l1} and Eq.\eqref{eq_l2} by combining them together:
    \begin{equation} \label{eq_l12}
        \mathcal{L} = \mathcal{L}_1 + \alpha \mathcal{L}_2.
    \end{equation}

\subsubsection{Optimization}
    Optimizing objective function \eqref{eq_l1} in large graphs is computationally expensive, which needs to compute all the valid triplets in $\mathbf{D}$.
    Hence, we uniformly sample triplets from $\mathbf{D}$ , replacing $\sum_{(i,j,k) \in \mathbf{D}}$ with $\mathbb{E}_{(i,j,k) \sim \mathbf{D}}$ in Eq.\eqref{eq_l1}.
    $M$ triplets are sampled in each iteration from $\mathbf{D}$ to compute the estimation of gradient.

    $Z$ in objective function \eqref{eq_l2} is sampled from $Q(Z|X)$, which is a non-continuous operation without gradient.
    Similar to Variational Auto-Encoders (VAE) \cite{doersch2016tutorial}, the "reparameterization trick" is used here for the optimization of the above objective function via the deep neural networks.
    Firstly, we sample $\epsilon \sim \mathcal{N}(0,\mathbf{I})$. Then, we can calculate $Z = \mu(X) + \Sigma^{1/2}(X) * \epsilon$.
    Consequently, the objective function \eqref{eq_l2} becomes deterministic and continuous in the parameter spaces of encoders $Q$ and decoders $G$, given a fixed $X$ and $\epsilon$, 
    which means the gradients can be computed by backpropagation in deep neural networks. 

\subsubsection{Complexity analysis}
    Algorithm \ref{alg1} lists each step of DVNE.
    The complexity of gradient computation and parameters updating during the training procedure is $O(T * M * (d_{ave}S+SL+L))$, where $T$ is the number of iterations, $M$ stands for the number of edges, $d_{ave}$ represents the average degree of all nodes, $L$ stands for the dimension of embedding vectors and $S$ represents the size of hidden layer.
    Because only non-zero elements in $x_i$ are reconstructed in DVNE, the computational complexity of the first and last hidden layers is $O(d_{ave}S)$, while that of other hidden layers is $O(SL)$. In addition, computaion of the $W_2$ distance among distributions takes $O(L)$.
    In experiments, convergence can be achieved with a small number of iterations $T$ (e.g., $T \leq 50$).

\begin{algorithm}[t]\caption{DVNE Embedding} \label{alg1}
\begin{algorithmic}[1]
    \Require The network $\mathbf{G}=\{\mathbf{V},\mathbf{E}\}$ with the transition matrix $\mathbf{P}$, the parameter $\alpha$
    \Ensure Network embeddings $\{\mathbf{h}_i\}_{i=1}^N$ and updated parameters $\theta = \{ \mathbf{W}^{(i)},\mathbf{b}^{(i)}\}_{i=1}^5$
    \State Initialize parameters $\theta$ by xavier initialization
    \While {$\mathcal{L}$ does not converge}
        \State Uniformly sample $M$ triplets from $\mathbf{D}$
        \State Split these triplets into a number of batches
        \State Compute partial derivative $\partial{\mathcal{L}} / \partial{\theta}$
                with backpropagation algorithm to update ${\theta}$
    \EndWhile
\end{algorithmic}
\end{algorithm}
\section{Dynamic-aware Network Embedding}

\label{sec:DepthLGP_introduction}

Despite the commendable success network embedding has achieved in tasks such as classification and recommendation, most existing algorithms in the literature to date are primarily designed for static networks, where all nodes are known before learning.
However, for large-scale networks, it is infeasible to rerun network embedding whenever new nodes arrive, especially considering the fact that rerunning network embedding also results in the need of retraining all downstream classifiers.
How to efficiently infer proper embeddings for out-of-sample nodes, i.e.,\ nodes that arrive after the embedding process, remains largely unanswered.

Several graph-based methods in the literature can be adapted to infer out-of-sample embeddings given in-sample ones.
Many of them deduce embeddings of new nodes by performing information propagation~\cite{Zhu02learningfrom}, or optimizing a loss that encourages smoothness between linked nodes~\cite{Zhu2003-SLU,Delalleau05efficientnon}.
There are also methods that aim to learn a function mapping node features (e.g.,\ text attributes,
or rows of the adjacency matrix when attributes are unavailable) to outcomes/embeddings, while imposing a manifold regularizer derived from the graph~\cite{Belkin-MRG}. The embeddings of out-of-sample nodes can then be predicted based on their features by these methods.
Nevertheless, existing methods are facing several challenges.
Firstly, the inferred embeddings of out-of-sample nodes should preserve intricate network properties with embeddings of in-sample nodes.
For example, high-order proximity, among many other properties, is deemed especially essential to be preserved by network embedding~\cite{tang2015line,Cao15-GraRep,ou2016asymmetric}, and thus must be reflected by the inferred embeddings.
Secondly, as downstream applications (e.g.,\ classification) will treat in-sample and out-of-sample nodes equally, the inferred embeddings and in-sample embeddings should possess similar characteristics (e.g.,\ magnitude, mean, variance, etc.), i.e.,\ belong to a homogeneous space,
resulting in the need of a model expressive enough to characterize the embedding space.
Finally, maintaining fast prediction speed is crucial, especially considering the highly dynamic nature of real-world networks.
This final point is even more challenging due to the demand of simultaneously fulfilling the previous two requirements.

\begin{figure}[t]
	\centering
	\includegraphics[width=0.46\textwidth]{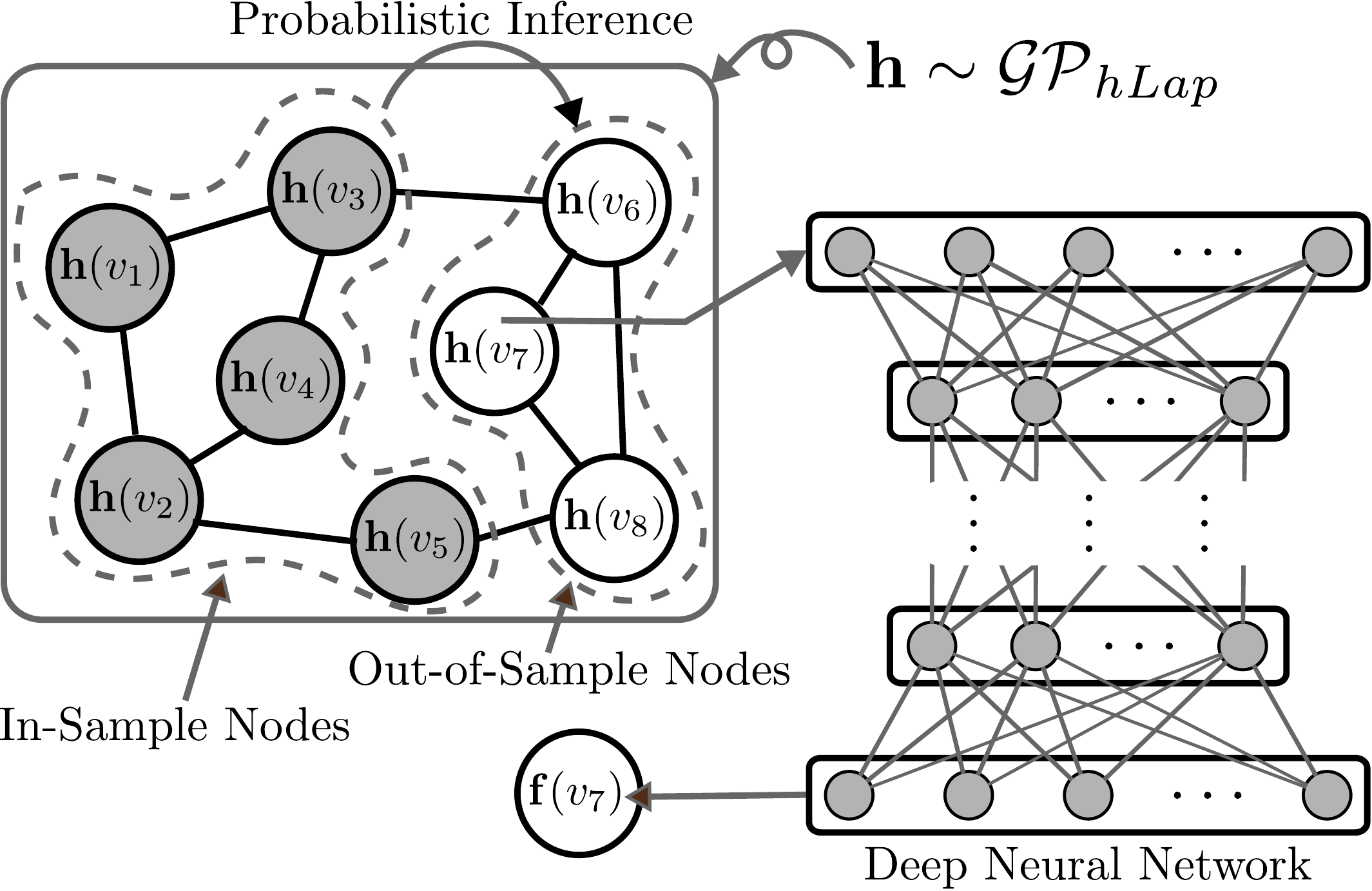}
	\caption{Here $v_6, v_7$, and $v_8$ are out-of-sample nodes. Values of $\mathbf{h}(\cdot)$ are latent states. Values of shaded nodes are learned during training.
		To predict $\mathbf{f}(v_7)$, DepthLGP first predicts $\mathbf{h}(v_7)$ via probabilistic inference, then passes $\mathbf{h}(v_7)$ through a neural network to obtain $\mathbf{f}(v_7)$, figure from~\cite{ma2018depthlgp}.}
	\label{fig-frwk}
\end{figure}

To infer out-of-sample embeddings, Ma~\etal~\cite{ma2018depthlgp} propose a Deeply Transformed High-order Laplacian Gaussian Process (DepthLGP) approach (see Figure~\ref{fig-frwk}) through combining nonparametric probabilistic modeling with deep neural networks.
More specifically, they
 first design a high-order Laplacian Gaussian process (hLGP) prior with a carefully constructed kernel that encodes important network properties such as high-order proximity.
Each node is associated with a latent state that follows the hLGP prior.
They then employ a deep neural network to learn a nonlinear transformation function from these latent states to node embeddings.
The introduction of a deep neural network increases the expressive power of our model and improves the homogeneity of inferred embeddings with in-sample embeddings.
Theories on the expressive power of DepthLGP are derived.
Overall, their proposed DepthLGP model is fast and scalable, requiring
zero knowledge of out-of-sample nodes during training process.
The prediction routine revisits the evolved network rapidly and can produce inference results analytically with desirable time complexity linear with the number of in-sample nodes.
DepthLGP is a general solution, in that it is applicable to embeddings learned by any network embedding algorithms.

\subsection{The DepthLGP Model}

In this section, we first formally formulate the out-of-sample node problem, and then discuss the DepthLGP model as well as theories on its expressive power.

\subsubsection{Problem Definition}
DepthLGP primarily considers undirected networks.
Let $\mathcal{G}$ be the set of all possible networks and $\mathcal{V}$ be the set of all possible nodes.
Given a specific network $G=(V, E) \in \mathcal{G}$ with nodes $V=\{v_1, v_2, \ldots, v_n\} \subset \mathcal{V}$ and edges $E$, a network embedding algorithm aims to learn values of a function $\mathbf{f}:\mathcal{V}\to\mathbb{R}^d$ for nodes in $V$.
As the network evolves over time, a batch of $m$ new nodes $V^{*} = \{v_{n+1}, v_{n+2},\ldots$, $v_{n+m}\} \subset \mathcal{V}\setminus V$ arrives, and expands $G$ into a larger network $G'=(V', E')$, where $V' = V\cup V^{*}$. Nodes in $V^*$ are called out-of-sample nodes. The problem, then, is to infer values of $\mathbf{f}(v)$ for $v\in V^{*}$, given $G'=(V', E')$ and $\mathbf{f}(v)$ for $v\in V$.

\subsubsection{Model Description}

DepthLGP first assumes that there exists a latent function $\mathbf{h}:\mathcal{V}\to\mathbb{R}^s$,
and the embedding function $\mathbf{f}:\mathcal{V}\to\mathbb{R}^d$ is transformed from the said latent function.
To be more specific, let $\mathbf{g}:\mathbb{R}^s\to\mathbb{R}^d$ be the transformation function.
DepthLGP then assumes that $\mathbf{f}(v) = \mathbf{g}(\mathbf{h}(v))$ for all $v\in\mathcal{V}$.
Since the transformation can potentially be highly nonlinear, the authors use a deep neural network to serve as $\mathbf{g}(\cdot)$.
DepthLGP further assumes that the $s$ output dimensions of $\mathbf{h}(\cdot)$, i.e.\ $h_k:\mathcal{V}\to \mathbb{R}$ for $k =1,2,\ldots, s$, can be modeled independently. In other words, DepthLGP deals with each $h_k(v)$ of $\mathbf{h}(v) = \begin{bmatrix} h_1(v), h_2(v), \ldots, h_s(v) \end{bmatrix}^\top$ separately.

Let us focus on $h_k(\cdot)$ for the moment.
Each $h_k(\cdot)$ is associated with a kernel that measures similarity between nodes of a network.
Take $G'=(V', E')$ with $V'=\{v_1,v_2,\ldots,v_{n+m}\}$ for example, the said kernel produces a kernel matrix $\mathbf{K}_k \in \mathbb{R}^{(n+m)\times (n+m)}$ for $G'$:
\begin{align*}
    \mathbf{K}_k & \triangleq \left[\mathbf{I} + \eta_k \mathbf{L}(\mathbf{\hat{A}}_k) +
        \zeta_k \mathbf{L}(\mathbf{\hat{A}}_k\mathbf{\hat{A}}_k) \right]^{-1},\\
    \mathbf{\hat{A}}_k & \triangleq \mathrm{diag}(\boldsymbol{\alpha}_k) \mathbf{A}'
        \mathrm{diag}(\boldsymbol{\alpha}_k), \\
    \boldsymbol{\alpha}_k & \triangleq [a_{v_1}^{(k)}, a_{v_2}^{(k)}, \ldots, a_{v_{n+m}}^{(k)}]^\top,
\end{align*}
where $\mathbf{A}'$ is the adjacency matrix of $G'$,
while $\eta_k \in [0, \infty), \zeta_k \in [0, \infty)$ and $a_{v}^{(k)}\in [0,1]$ for $v \in \mathcal{V}$ are parameters of the kernel. $\mathrm{diag}(\cdot)$ returns a diagonal matrix corresponding to its vector input, while $\mathbf{L}(\cdot)$ treats its input as an adjacency matrix and returns the corresponding Laplacian matrix, i.e.,\ $\mathbf{L}(\mathbf{A}) = \mathrm{diag}(\sum_i \mathbf{A}_{:, i}) - \mathbf{A}$.

The parameters of the proposed kernel have clear physical meanings.
$\eta_k$ indicates the strength of first-order proximity (i.e.,\ connected nodes are likely to be similar), while $\zeta_k$ is for second-order proximity (i.e.,\ nodes with common neighbors are likely to be similar).
On the other hand,
$a^{(k)}_v$ represents a node weight, i.e.,\ how much attention we should pay to node $v$ when conducting prediction.
Values of $a^{(k)}_v$ for in-sample nodes ($v\in V$) are learned along with $\eta_k$ and $\zeta_k$ (as well as parameters of the neural network $\mathbf{g}(\cdot)$) during training,
while values of $a^{(k)}_v$ for out-of-sample nodes ($v\in V^*$) are set to 1 during prediction, since we are always interested in these new nodes when inferring embeddings for them.
Node weights help DepthLGP avoid uninformative nodes. For example, in a social network, this design alleviates harmful effects of ``bot'' users that follow a large amount of random people and spam uninformative contents.

It is easy to see that $\mathbf{K}_k$ is positive definite, hence a valid kernel matrix.
The kernel in DepthLGP can be seen as a generalization of the regularized Laplacian kernel~\cite{Smola03-reglap}, in that the authors further introduce node weighting and a second-order term.
This kernel is referred as the high-order Laplacian kernel.

DepthLGP assumes that each sub-function $h_k(\cdot)$ follows a zero mean Gaussian process (GP)~\cite{Rasmussen05-GPML} parameterized by the high-order Laplacian kernel, i.e.,\
$h_k \sim \mathcal{GP}_{hLap}^{(k)}$.
This is equivalent to say that: For any $G_t = (V_t, E_t) \in \mathcal{G}$ with $V_t = \{v^{(t)}_1, v^{(t)}_2, \ldots, v^{(t)}_{n_t}\} \subset \mathcal{V}$, we have
\begin{align*}
    [h_k(v^{(t)}_1), h_k(v^{(t)}_2), \ldots, h_k(v^{(t)}_{n_t})]^\top
    \sim \mathcal{N}(\mathbf{0}, \mathbf{K}_k^{(t)}),
\end{align*}
where $\mathbf{K}_k^{(t)}$ is the corresponding high-order Laplacian kernel matrix computed on $G_t$.

The DepthLGP model can be summarized as follows.
\begin{align*}
    h_k  &\sim \mathcal{GP}_{hLap}^{(k)},&k=1,2,\ldots, s,\\
    \mathbf{h}(v)  &\triangleq [h_1(v), h_2(v),\ldots, h_s(v)]^\top, &v\in\mathcal{V},\\
    \mathbf{f}(v) \mid \mathbf{h}(v) &\sim
        \mathcal{N}(\mathbf{g}(\mathbf{h}(v)), \sigma^2 \mathbf{I}),&v\in \mathcal{V}.
\end{align*}
where $\sigma$ is a hyper-parameter to be manually specified.
The neural network $\mathbf{g}(\cdot)$ is necessary here,
since $\mathbf{f}(\cdot)$ itself might not follow the GP prior eactly.
The introduction of $\mathbf{g}(\cdot)$ allows the model to fit $\mathbf{f}(\cdot)$ more accurately.

\subsubsection{Prediction}

\begin{algorithm}[!t]
    \begin{algorithmic}[1]
        \Require $G'=(V', E')$ \Comment{$G=(V,E)$ evolves into $G'$.}
        \Ensure predicted values of $\mathbf{f}(v)$,
            $v\in V^*$ \Comment{$V^* \triangleq V'\setminus V$.}
        \LineComment{Let $V=\{v_1, v_2, \ldots, v_n\}$ be old nodes.}
        \LineComment{Let $V^*=\{v_{n+1}, v_{n+2}, \ldots, v_{n+m}\}$ be new nodes.}
        \LineComment{Let $\mathbf{A}'$ be the adjacency matrix of $G'$.}
        \For{$k=1,2,\ldots, s$}
            \LineComment{Values of $a_v^{(k)}$ are set to 1 for $v\in V^*$.}
            \State $\boldsymbol{\alpha} \gets [a_{v_1}^{(k)}, a_{v_2}^{(k)},
                \ldots, a_{v_{n+m}}^{(k)}]^\top$
            \State $\hat{\mathbf{A}}\gets
                \mathrm{diag}(\boldsymbol{\alpha})\;\mathbf{A}\;\mathrm{diag}(\boldsymbol{\alpha})$
            \LineComment{Function $\mathbf{L}(\cdot)$ below treats $\hat{\mathbf{A}}$ and $\hat{\mathbf{A}}\hat{\mathbf{A}}$ as adjacency matrices, and returns their Laplacian matrices.}
            \State $\mathbf{M} \gets \mathbf{I} +
                \eta_k \mathbf{L}(\hat{\mathbf{A}}) +
                \zeta_k \mathbf{L}(\hat{\mathbf{A}}\hat{\mathbf{A}})$
            \State $\mathbf{M}_{*,*}\gets$ the bottom-right $m\times m$ block of $\mathbf{M}$
            \State $\mathbf{M}_{*,x}\gets$ the bottom-left $m\times n$ block of $\mathbf{M}$
            \LineComment{Let $\mathbf{z}_x^{(k)} \triangleq
                [h_k(v_1), h_k(v_2), \ldots, h_k(v_n)]^\top$.}
            \LineComment{Compute ${\mathbf{M}_{*,x}}\mathbf{z}^{(k)}_x$ first below for efficiency.}
            \State $\mathbf{z}^{(k)}_* \gets
                -\mathbf{M}_{*,*}^{-1} {\mathbf{M}_{*,x}}\mathbf{z}^{(k)}_x$
            \Comment{$\mathbf{z}_*^{(k)}$ is a prediction of
                $[h_k(v_{n+1}), h_k(v_{n+2}),$ $\ldots,$ $h_k(v_{n+m})]^\top$.}
        \EndFor
        \For{$v \in V^*$}
            \LineComment{Previous lines have produced a prediction of $\mathbf{h}(v) = [h_1(v), h_2(v), \ldots, h_s(v)]^\top$. The line below now uses the said prediction to further predict $\mathbf{f}(v)$.}
            \State compute $\mathbf{g}(\mathbf{h}(v))$
                \Comment{It is a prediction of $\mathbf{f}(v)$.}
        \EndFor
    \end{algorithmic}
    \caption{DepthLGP's Prediction Routine}
    \label{alg-predict}
\end{algorithm}

Before new nodes arrive, we have the initial network $G=(V, E)$ with $V=\{v_1, v_2, \ldots, v_n\}$, and know the values of $\mathbf{f}(v)$ for $v\in V$.
The prediction routine assumes that there is a training procedure (see subsection ``Training'') conducted on $G$ and $\mathbf{f}(v)$ for $v\in V$ before new nodes arrive,
and the training procedure learns $\eta_k, \zeta_k, h_k(v), a^{(k)}_v$ for $k=1,2,\ldots, s$ and $v\in V$, as well as parameters of the transformation function $\mathbf{g}(\cdot)$.

As the network evolves over time, $m$ new nodes $V^{*} = \{v_{n+1}, v_{n+2},\ldots$, $v_{n+m}\}$ arrive and $G$ evolves into $G'=(V', E')$ with $V'=V\cup V^{*}$.
DepthLGP's prediction routine aims to predict $\mathbf{f}(v)$ for $v\in V^{*}$ by maximizing
$ 
p(\{\mathbf{f}(v): v\in V^{*} \} \mid \{\mathbf{f}(v): v\in V\}, \{\mathbf{h}(v): v\in V\}),
$ 
which, according to our model, is equal to
\begin{align*}
p(\{\mathbf{f}(v): v\in V^{*} \} \mid \{\mathbf{h}(v): v\in V\}).
\end{align*}
Yet, it requires integrating over all possible $\mathbf{h}(v)$ for $v\in V^{*}$.
DepthLGP therefore approximates it by maximizing
\begin{align*}
    p(\{\mathbf{f}(v): v\in V^{*} \}, \{\mathbf{h}(v): v\in V^{*} \} \mid \{\mathbf{h}(v): v\in V\}),
\end{align*}
which is equal to
\begin{align*}
    & p(\{\mathbf{f}(v): v\in V^{*} \} \mid  \{\mathbf{h}(v): v\in V^{*} \}) \\
    &\quad\; \times p(\{\mathbf{h}(v): v\in V^{*} \} \mid \{\mathbf{h}(v): v\in V\}).
\end{align*}
It can be maximized~\footnote{
Note that the first term $p(\{\mathbf{f}(v): v\in V^{*} \} \mid  \{\mathbf{h}(v): v\in V^{*} \})$ is maximized with $\mathbf{f}(v) = \mathbf{g}(\mathbf{h}(v))$, and the maximum value of this probability density is a \emph{constant} unrelated with $\mathbf{h}(v)$. Hence we can focus on maximizing the second term first.}
by first maximizing the second term, i.e.\ $p(\mathbf{h}(v): v\in V^{*} \} \mid \{\mathbf{h}(v): v\in V\})=\prod_{k=1}^s p(\{h_k(v): v\in V^{*} \} \mid \{h_k(v): v\in V\})$, and then setting $\mathbf{f}(v) = \mathbf{g}(\mathbf{h}(v))$ for $v\in V^*$.

Let us now focus on the subproblem, i.e.,\ maximizing
\begin{align} \label{eq-subgoal}
    p(\{h_k(v): v\in V^{*} \} \mid \{h_k(v): v\in V\}).
\end{align}
Since $h_k\sim \mathcal{GP}_{hLap}$, by definition we have
\begin{align*}
    [h_k(v_1), h_k(v_2), \ldots, &h_k(v_n),  h_k(v_{n+1}), \ldots, h_k(v_{n+m})]^\top \\
    & \sim \mathcal{N}(\mathbf{0}, \mathbf{K}_k),
\end{align*}
where $\mathbf{K}_k$ is the corresponding kernel matrix computed on $G'$.
We then have the following result:
\begin{align*}
    \mathbf{z}_*^{(k)} \mid \mathbf{z}_x^{(k)} & \sim \mathcal{N}
        (\mathbf{K}_{*,x} \mathbf{K}_{x,x}^{-1} \mathbf{z}_x^{(k)},
        {\mathbf{K}_{*,*}}- \mathbf{K}_{*,x}\mathbf{K}_{x, x}^{-1}\mathbf{K}_{*,x}^\top),\\
    \mathbf{z}_x^{(k)} &\triangleq \begin{bmatrix}h_k(v_1), h_k(v_2), \ldots, h_k(v_n)\end{bmatrix}^\top, \\
    \mathbf{z}_*^{(k)} &\triangleq \begin{bmatrix}h_k(v_{n+1}), h_k(v_{n+2}), \ldots, h_k(v_{n+m})\end{bmatrix}^\top,
\end{align*}
where
$\mathbf{K}_{x,x}$, $\mathbf{K}_{*,x}$, and $\mathbf{K}_{*,*}$ are respectively the
top-left $n\times n$, bottom-left $m\times n$, and bottom-right $m\times m$ blocks of $\mathbf{K}_k$.
Though $\mathbf{K}_{*,x} \mathbf{K}_{x,x}^{-1} \mathbf{z}_x^{(k)}$ is expensive to compute, it can thankfully be proved to be equivalent to $-\mathbf{M}_{*,*}^{-1} {\mathbf{M}_{*,x}}\mathbf{z}^{(k)}_x$,
where $\mathbf{M}_{*,x}$ and $\mathbf{M}_{*,*}$ are respectively the bottom-left $m\times n$ and bottom-right $m\times m$ blocks of $\mathbf{K}_k^{-1}$. $\mathbf{K}_k^{-1}$ is cheap to obtain as the matrix inversion gets cancelled out. And computing $\mathbf{M}_{*,*}^{-1}$ is fast, since $m\ll n$.
As a result, Eq.~\eqref{eq-subgoal} is maximized as:
\begin{align*}
    \mathbf{z}_*^{(k)}=-\mathbf{M}_{*,*}^{-1} {\mathbf{M}_{*,x}}\mathbf{z}^{(k)}_x.
\end{align*}

As a side node, maximizing Eq.~\eqref{eq-subgoal} is in fact equivalent to minimizing the following criterion:
\begin{align*}
                         & \sum_{u\in V'} \left[h_k(u)\right]^2 + \\
    \frac{1}{2}\eta_k  & \sum_{u,v\in V'} a^{(k)}_u A'_{uv} a^{(k)}_v \left[h_k(u) - h_k(v)\right]^2 + \\
    \frac{1}{2}\zeta_k & \sum_{u,v,w\in V'}
               a^{(k)}_u A'_{uw} a^{(k)}_w a^{(k)}_w A'_{wv} a^{(k)}_v
               \left[h_k(u) - h_k(v)\right]^2,
\end{align*}
where $A'_{uv}$ is the edge weight (zero if not connected) between $u$ and $v$ in $G'$.
This form hints at the physical meanings of $\eta, \zeta$ and $a^{(k)}_v$ from another perspective.

The prediction routine is summarized in Algorithm~\ref{alg-predict}.

\subsubsection{Training}

Training is conducted on the initial network, i.e.,\ $G=(V, E)$, with the values of $\mathbf{f}(v)$ for $v\in V$. Since it does not depend on the evolved network $G'=(V', E')$, it can be carried out before new nodes arrive. It aims to find suitable parameters of the neural network $\mathbf{g}(\cdot)$ and proper values of $\eta_k, \zeta_k, a^{(k)}_v, h_k(v)$ for $v \in V$ and $k=1,2,\ldots, s$.

The authors apply empirical minimum risk (ERM) training to DepthLGP model.
ERM training of a probabilistic model, though not as conventional as maximum likelihood estimation (MLE) and maximum a posteriori (MAP) estimation, has been explored by many researchers before, e.g.,\ \cite{pmlr-v15-stoyanov11a}.
Using ERM training here eliminates the need to specify $\sigma$, and is faster and more scalable as it avoids computing determinants.

The training procedure is listed in Algorithm~\ref{alg-train}.
The basic idea is to first sample some subgraphs from $G$, then treat a small portion of nodes in each subgraph as if they were out-of-sample nodes, and minimize empirical risk on these training samples (i.e.,\ minimize mean squared error of predicted embeddings).

Now let us describe how each training sample, $G_t'=(V_t', E_t')$, is sampled.
DepthLGP first samples a subset of nodes, $V_t^*$, from $G$, along a random walk path.
Nodes in $V_t^*$ are treated as ``new'' nodes. 
DepthLGP then samples a set of nodes, $V_t$, from the neighborhood of $V_t^*$.
DepthLGP defines the neighborhood of $V_t^*$ to be nodes that are no more than two steps away from $V_t^*$. 
Finally, let $V_t' = V_t^*\cup V_t$, and $G_t'$ be the subgraph induced in $G$ by $V_t'$.

Optimization is be done with a gradient-based method and the authors use Adam~\cite{kingma2014adam} for this purpose. Gradients are computed using back-propagation~\cite{Rumelhart-BP,Dreyfus196230}.
Good parameter initialization can substantially improve convergence speed. To allow easy initialization, they use a residual network (more strictly speaking, a residual block)~\cite{He2015} to serve as $\mathbf{g}(\cdot)$. In other words, DepthLGP chooses $\mathbf{g}(\cdot)$ to be of the form $\mathbf{g}(\mathbf{x}) = \mathbf{x} + \mathbf{\tilde{g}}(\mathbf{x})$, where $\mathbf{\tilde{g}}(\cdot)$ is a feed-forward neural network. In this case, $s=d$. Thus it is able to initialize values of $\mathbf{h}(v)$ to be values of $\mathbf{f}(v)$ for nodes in $V$.

\begin{algorithm}[!t]
    \begin{algorithmic}[1]
        \Require $G=(V,E)$;\; $\mathbf{f}(v)$ for $v\in V$
        \Ensure $\eta_k, \zeta_k, a^{(k)}_v, h_k(v)$ for $v\in V$ and $k=1,2,\ldots, s$;\;
            parameters of the neural network $\mathbf{g}(\cdot)$
        \For{$t=1,2,\ldots, T$}
            \LineComment{See subsection ``Training'' for more details on how to sample $V_t^*$ and $V_t$.}
            \State $V_t^* \gets $ a few nodes sampled along a random walk
            \State $V_t \gets$ some nodes in $V_t^*$'s neighborhood
            \State $V_t' \gets V_t \cup V_t^*$
            \State $G_t'\gets$ the subgraph induced in $G$ by $V_t'$
            \State Execute Algorithm~\ref{alg-predict}, but using $G_t'$ in place of $G'$,
            $V_t$ in place of old nodes, and $V_t^*$ in place of new nodes.
            Save its prediction of $\mathbf{f}(v)$ as $\mathbf{\tilde{f}}(v)$ for $v\in V_t^*$.
            \State loss $\gets \frac{1}{|V_t^*|}
                \sum_{v\in V_t^*}\|\mathbf{f}(v)-\mathbf{\tilde{f}}(v)\|^2_{l^2}$
            \State Use back-propagation to compute the gradient of the loss with respect to
                $\eta_k, \zeta_k, a^{(k)}_v, h_k(v)$ for $v\in V_t$
                and parameters of $\mathbf{g}(\cdot)$.
            \State Apply gradient descent.
        \EndFor
    \end{algorithmic}
    \caption{DepthLGP's Training Routine}
    \label{alg-train}
\end{algorithm}

\subsubsection{On the Expressive Power of DepthLGP}

Theorem~\ref{theo-expr} below demonstrates the expressive power of DepthLGP, i.e.,\ to what degree it can model arbitrary $\mathbf{f}:\mathcal{V}\to\mathbb{R}^d$.
\begin{theorem}[Expressive Power]\label{theo-expr}
    For any $\epsilon > 0$, any nontrivial $G=(V,E)$ and any $\mathbf{f}:\mathcal{V}\to\mathbb{R}^d$, there exists a parameter setting for DepthLGP, such that: For any $v^*\in V$, after deleting all information (except $G$) related with $v^*$, DepthLGP can still recover $\mathbf{f}(v^*)$ with error less than $\epsilon$, by treating $v^*$ as a new node and using Algorithm~\ref{alg-predict} on $G$.
\end{theorem}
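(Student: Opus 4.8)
The plan is to decompose the recovery of $\mathbf{f}(v^*)$ into the two stages that Algorithm~\ref{alg-predict} performs: the high-order Laplacian GP first predicts the latent state $\mathbf{h}(v^*)$ from the in-sample latent states, and then the neural network $\mathbf{g}(\cdot)$ maps it to the embedding. Since every parameter may be chosen freely (the in-sample latent values $h_k(v)$, the kernel parameters $\eta_k,\zeta_k,a^{(k)}_v$, and the weights of $\mathbf{g}$), I would exhibit a setting in which (i) the leave-one-out latent prediction $v^*\mapsto\hat{\mathbf{h}}(v^*)$ is injective on the finite set $V$, and (ii) $\mathbf{g}$ approximates, to within $\epsilon$, the finite interpolation sending each $\hat{\mathbf{h}}(v^*)$ to its target $\mathbf{f}(v^*)$.

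First I would make the latent prediction explicit. When a single node $v^*$ is held out and treated as new ($m=1$), Eq.~\eqref{eq-subgoal} is maximized by $\hat h_k(v^*)=-\big(M^{(k)}_{v^*v^*}\big)^{-1}\sum_{u\ne v^*}M^{(k)}_{v^*u}\,h_k(u)$, where $\mathbf{M}^{(k)}=\mathbf{K}_k^{-1}=\mathbf{I}+\eta_k\mathbf{L}(\hat{\mathbf{A}}_k)+\zeta_k\mathbf{L}(\hat{\mathbf{A}}_k\hat{\mathbf{A}}_k)$ and $a^{(k)}_{v^*}$ is reset to $1$. Here $M^{(k)}_{v^*u}=-\eta_k(\hat{\mathbf{A}}_k)_{v^*u}-\zeta_k(\hat{\mathbf{A}}_k\hat{\mathbf{A}}_k)_{v^*u}$ off the diagonal and $M^{(k)}_{v^*v^*}=1+\eta_k\sum_{u}(\hat{\mathbf{A}}_k)_{v^*u}+\cdots>0$, so the prediction is a fixed linear combination of the other nodes' latent states. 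Collecting the $n=|V|$ leave-one-out maps into a single operator $\mathbf{T}$ with zero diagonal and $\mathbf{T}_{v^*u}=-M^{(k)}_{v^*u}/M^{(k)}_{v^*v^*}$, the stacked predictions are $\hat{\mathbf{H}}=\mathbf{T}\mathbf{H}$, where $\mathbf{H}\in\mathbb{R}^{n\times s}$ holds the in-sample latent states.

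The injectivity step then reduces to two observations. If the rows of $\mathbf{T}$ are pairwise distinct, then for a generic $\mathbf{H}$ (indeed for $s\ge1$ and almost every $\mathbf{H}$) the rows of $\hat{\mathbf{H}}=\mathbf{T}\mathbf{H}$ are pairwise distinct, since rows $v^*$ and $w^*$ coincide only when $(\mathbf{T}_{v^*,:}-\mathbf{T}_{w^*,:})^\top$ is orthogonal to every column of $\mathbf{H}$, and the finitely many exceptional hyperplanes are avoided generically. It therefore suffices to make the rows of $\mathbf{T}$ distinct. Taking $\zeta_k=0$, $\eta_k>0$ and all $a^{(k)}_v=1$, row $v^*$ of $\mathbf{T}$ is supported exactly on $\mathcal{N}(v^*)$, so the rows are distinct as soon as no two nodes share the same neighborhood and no node is isolated, which is the content I would read into ``nontrivial $G$.'' The node weights $a^{(k)}_v$ and the second-order term $\zeta_k$ are the additional knobs one can turn to separate nodes under a weaker hypothesis, but they are not needed for this minimal argument.

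Given injectivity, the points $\{\hat{\mathbf{h}}(v^*):v^*\in V\}$ are $n$ distinct vectors in $\mathbb{R}^s$, so some continuous (even smooth) map carries each to the prescribed $\mathbf{f}(v^*)$; since $\mathbf{g}(\cdot)$ is a neural network, hence a universal approximator on compacta, it can be chosen to match this interpolant to within $\epsilon$ at these finitely many points, giving $\|\mathbf{g}(\hat{\mathbf{h}}(v^*))-\mathbf{f}(v^*)\|<\epsilon$ for every $v^*$, which is exactly what Algorithm~\ref{alg-predict} outputs. The main obstacle is the injectivity of the leave-one-out predictor: everything downstream is standard universal approximation, but the separation of the held-out nodes is genuinely a property of the graph and the hLGP kernel, and it is here that the ``nontrivial'' hypothesis is indispensable, because structurally equivalent nodes force identical latent predictions and then no $\mathbf{g}$ can recover differing targets.
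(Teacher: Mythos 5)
Your two-stage skeleton (make the leave-one-out latent prediction injective on $V$, then invoke universal approximation for $\mathbf{g}$) is the route the paper intends --- the paper offers no detailed proof, only the Remark that the theorem ``can be proved with a constructive proof based on the universal approximation property of neural networks.'' The gap is in how you discharge the injectivity stage. You set $\zeta_k=0$, $a^{(k)}_v\equiv 1$, and then read ``nontrivial $G$'' as ``no two nodes share the same neighborhood and no node is isolated.'' But the paper's Remark defines nontrivial differently: every connected component of $G$ has at least three nodes. That hypothesis permits false twins --- distinct, non-adjacent nodes $u,v$ with $\mathcal{N}(u)=\mathcal{N}(v)$, e.g.\ two leaves of a star or two opposite vertices of a $4$-cycle. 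For such a pair your construction fails not just for your particular parameter choice but for \emph{every} one: with $\zeta_k=0$, the held-out prediction is $\hat h_k(v^*)=\eta_k\sum_{u\in\mathcal{N}(v^*)}a^{(k)}_u A_{v^*u}h_k(u)\bigl(1+\eta_k\sum_{u\in\mathcal{N}(v^*)}a^{(k)}_u A_{v^*u}\bigr)^{-1}$ (your own formula, after resetting $a^{(k)}_{v^*}$ to $1$), a function only of the latent states and weights of $\mathcal{N}(v^*)$; hence twins receive identical predictions however $\eta_k$, $a^{(k)}$ and $h_k$ are chosen, and no $\mathbf{g}$ can place the common value within $\epsilon$ of two targets that are more than $2\epsilon$ apart. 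This is exactly the content of the paper's Theorem~\ref{theo-2nd}: Theorem~\ref{theo-expr} is \emph{false} when $\zeta_k$ is fixed to zero. So the sentence in which you declare the second-order term and the node weights ``not needed'' is precisely where the proof breaks; as written, you have proved a strictly weaker statement under a hypothesis the paper does not assume.

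The repair is to keep $\zeta_k>0$ and let the second-order term do the separating. When $u$ is held out, its prediction also aggregates the latent states of its two-hop neighborhood with positive coefficients proportional to $\zeta_k(\hat{\mathbf{A}}_k\hat{\mathbf{A}}_k)_{uw}$; for false twins $u,v$ this two-hop set contains $v$ itself (through any common neighbor), whereas the prediction of held-out $v$ contains $h_k(u)$ in its place. Since the in-sample latent states are free parameters, a generic choice of pairwise-distinct $h_k(\cdot)$ makes these two weighted averages differ, after which your generic-position observation and the universal-approximation finish go through verbatim. What an honest proof under the paper's hypothesis must still verify is that $\zeta_k>0$ together with ``every component has at least three nodes'' separates \emph{all} pairs of held-out nodes (twins via the two-hop term, the remaining pairs as in your argument); that verification --- not the approximation step --- is the substance of the theorem.
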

\begin{remark}
A nontrivial $G$ means that all connected components of $G$ have at least three nodes.
Information related with $v^*$ includes $\mathbf{f}(v^*), h_k(v^*)$ and $a^{(k)}_{v^*}$ for $k=1,2,\ldots,s$
(note that during prediction, $a^{(k)}_{v^*}$ is replaced by 1 since $v^*$ is treated as a new node).
Error is expressed in terms of $l^2$-norm.
It can be proved with a constructive proof based on the universal approximation property of neural networks~\cite{Cybenko1989,Hornik:1991}.
\end{remark}

Theorem~\ref{theo-2nd} below then emphasizes the importance of second-order proximity: Even though DepthLGP leverages the expressive power of a neural network, modeling second-order proximity is still necessary.

\begin{theorem}[On Second-Order Proximity] \label{theo-2nd}
    Theorem~\ref{theo-expr} will not hold if DepthLGP does not model second-order proximity.
    That is,
    there will exist $G=(V,E)$ and $\mathbf{f}:\mathcal{V}\to\mathbb{R}^d$ that DepthLGP cannot model,
    if $\zeta_k$ is fixed to zero.
\end{theorem}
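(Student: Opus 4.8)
The plan is to prove this existence statement directly: I would exhibit one small network $G$ and one target embedding $\mathbf{f}$ on which DepthLGP with $\zeta_k$ fixed to $0$ provably fails, no matter how its remaining parameters are chosen. The reduction I would rely on is that DepthLGP's inferred embedding of an out-of-sample node $v$ is $\mathbf{g}(\mathbf{h}(v))$, i.e.\ the \emph{same} deterministic transformation $\mathbf{g}(\cdot)$ applied to the predicted latent vector $\mathbf{h}(v)=[h_1(v),\ldots,h_s(v)]^\top$ produced by Algorithm~\ref{alg-predict}. Consequently, if two distinct in-sample nodes collapse to identical predicted latent vectors after each is treated as new, they must receive identical inferred embeddings, and no amount of expressive power in $\mathbf{g}(\cdot)$ can separate them. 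Thus it suffices to find a $G$ whose first-order-only leave-one-out prediction map $v\mapsto\mathbf{h}(v)$ is non-injective for \emph{every} admissible choice of $\eta_k, a^{(k)}_v$ and in-sample latent values.

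The graph I would use is the star $K_{1,2}$ (the path on three nodes) with center $c$ and two leaves $l_1,l_2$; it is nontrivial, since its single connected component has three nodes. Fixing $\zeta_k=0$ turns the precision matrix into $\mathbf{K}_k^{-1}=\mathbf{I}+\eta_k\mathbf{L}(\hat{\mathbf{A}}_k)$, so specializing $\mathbf{z}_*^{(k)}=-\mathbf{M}_{*,*}^{-1}\mathbf{M}_{*,x}\mathbf{z}_x^{(k)}$ to a single held-out node $v^*$ (with $a^{(k)}_{v^*}$ set to $1$) yields the closed form
\begin{equation}
h_k(v^*)=\frac{\eta_k\sum_{u}A'_{v^* u}\,a^{(k)}_u\,h_k(u)}{1+\eta_k\sum_{u}A'_{v^* u}\,a^{(k)}_u},
\end{equation}
a shrunk weighted average of the latent values of the \emph{immediate} neighbors of $v^*$. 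Each leaf has the single neighbor $c$ with the same edge weight, so this collapses to $h_k(l_i)=\frac{\eta_k a^{(k)}_c}{1+\eta_k a^{(k)}_c}\,h_k(c)$, which is independent of $i$. Hence the predicted latent vectors of $l_1$ and $l_2$ coincide for every $k$ and every setting of $\eta_k, a^{(k)}_c, h_k(c)$: first-order smoothing cannot tell the two leaves apart.

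I would then close the argument by choosing any $\mathbf{f}$ with $\mathbf{f}(l_1)\neq\mathbf{f}(l_2)$ and any $\epsilon<\tfrac12\|\mathbf{f}(l_1)-\mathbf{f}(l_2)\|_{l^2}$. Because $\mathbf{g}(\mathbf{h}(l_1))=\mathbf{g}(\mathbf{h}(l_2))$, the two inferred embeddings are identical, so by the triangle inequality at least one of them differs from its target by at least $\epsilon$. Thus the recovery guarantee of Theorem~\ref{theo-expr} cannot hold when $\zeta_k=0$, which is exactly the assertion.

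The step I expect to be the crux is the choice of gadget together with the universal quantifier over parameters: I must pick nodes whose first-order prediction is \emph{forced} to coincide for all $\eta_k, a^{(k)}_v$ and all learned latent states, and the robust way to guarantee this is to use nodes sharing exactly one common neighbor and no others (sibling leaves), which are structurally equivalent in the sense of Definition~\ref{def:se}. The accompanying subtlety I would flag explicitly is why this example does not also doom the full model, so that it genuinely isolates the role of $\zeta_k$: with $\zeta_k>0$ the second-order term $\mathbf{L}(\hat{\mathbf{A}}_k\hat{\mathbf{A}}_k)$ couples $l_1$ and $l_2$ through their common neighbor $c$, so the leave-one-out prediction of $l_i$ now also depends on the in-sample latent value of the \emph{other} leaf; since a different leaf is excluded in each prediction experiment, the two predictions separate, remaining consistent with Theorem~\ref{theo-expr}.
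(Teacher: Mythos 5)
Your proof is correct, and it is worth noting that this chapter states Theorem~\ref{theo-2nd} with no proof at all (the result is quoted from the original DepthLGP paper), so your construction has to stand on its own --- and it does. The three ingredients are each sound. First, the reduction is right: the inferred embedding is the same deterministic $\mathbf{g}(\cdot)$ applied to the predicted latent vector, so once two held-out nodes are forced to the same $\mathbf{h}$-prediction, no expressive power in $\mathbf{g}(\cdot)$ can separate them; this is exactly what neutralizes the universal-approximation mechanism behind Theorem~\ref{theo-expr}. Second, the closed form checks out: with $\zeta_k=0$ and $a^{(k)}_{v^*}=1$, the precision matrix gives $\mathbf{M}_{*,*}=1+\eta_k\sum_u A'_{v^*u}a^{(k)}_u$ and $\mathbf{M}_{*,x}=-\eta_k\,[A'_{v^*u}a^{(k)}_u]_u$, so $\mathbf{z}_*^{(k)}=-\mathbf{M}_{*,*}^{-1}\mathbf{M}_{*,x}\mathbf{z}_x^{(k)}$ is precisely your shrunk weighted average over \emph{immediate} neighbors, and the two leaves of the path $l_1$--$c$--$l_2$ (non-adjacent, identical neighborhoods, hence structurally equivalent in the sense of Definition~\ref{def:se}) both receive $\frac{\eta_k a^{(k)}_c}{1+\eta_k a^{(k)}_c}h_k(c)$ for every $k$ and every admissible choice of $\eta_k$, $a^{(k)}_c$, $h_k(c)$. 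Third, the quantifiers are handled correctly: Theorem~\ref{theo-expr} demands a single parameter setting achieving error $<\epsilon$ at \emph{every} held-out node, so exhibiting one $\epsilon<\tfrac12\|\mathbf{f}(l_1)-\mathbf{f}(l_2)\|_{l^2}$ for which every setting fails at $l_1$ or $l_2$ (via the triangle inequality) is exactly the required negation. Your closing observation --- that $\zeta_k>0$ makes $(\hat{\mathbf{A}}_k\hat{\mathbf{A}}_k)_{l_1 l_2}>0$, so each leaf's prediction then depends on the \emph{other} leaf's in-sample latent state and the collapse disappears --- is what certifies the gadget isolates the second-order term rather than exposing a defect of the full model, consistent with the paper's reading of $\zeta_k$ as the strength of second-order proximity.
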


\subsection{Extensions and Variants}

\subsubsection{Integrating into an Embedding Algorithm}
DepthLGP can also be easily incorporated into an existing network embedding algorithm to derive a new embedding algorithm capable of addressing out-of-sample nodes.
Take node2vec~\cite{Grover16-node2vec} for example.
For an input network $G=(V, E)$ with $V=\{v_1,\ldots, v_n\}$,
node2vec's training objective can be abstracted as
\begin{align*}
    \min_{\theta, \mathbf{F}} \mathcal{L}_\theta(\mathbf{F}, G),
\end{align*}
where columns of $\mathbf{F}\in\mathbb{R}^{d\times n}$ are target node embeddings to be learned, and $\theta$ contains parameters other than $\mathbf{F}$.

Let us use $\mathbf{f}_{\phi}:\mathcal{V}\to\mathbb{R}^d$
to represent a function parameterized by $\phi$.
This function is defined as follows. For $v\in V$, it first samples nodes from $v$'s neighborhood (see subsection ``Training'' on how to sample them) and induces a subgraph from $G$ containing $v$ and these sampled nodes. It then treats $v$ as a new node, nodes sampled from $v$'s neighborhood as old nodes, and runs Algorithm~\ref{alg-predict} on the induced subgraph to obtain a prediction of $v$'s embedding---this is the value of $\mathbf{f}_{\phi}(v)$. By definition, $\phi$ contains parameters of a neural network, $\eta_k, \zeta_k$, $a^{(k)}_v$, and $h_k(v)$ for $v\in V$, $k=1,2,\ldots,s$.
To derive a new embedding algorithm based on node2vec, we can simply change the training objective to:
\begin{align*}
    \min_{\theta, \phi} \mathcal{L}_\theta([\mathbf{f}_\phi(v_1), \mathbf{f}_\phi(v_2), \ldots,
                                            \mathbf{f}_\phi(v_n)], G).
\end{align*}
The authors name this new algorithm node2vec++, where $\mathbf{f}_\phi(v)$ is node $v$'s embedding.
Clearly, node2vec++ can handle out-of-sample nodes efficiently in the same fashion as DepthLGP.

\subsubsection{Efficient Variants}

When predicting node embeddings for out-of-sample nodes, DepthLGP can collectively infer all of them in one pass. However, if the number of newly arrived nodes is large, it is more efficient (and more memory-saving) to process new nodes in a batch-by-batch way: For each unprocessed new node $v$, find the largest connected component containing $v$ and other new nodes (but not old nodes). Let $V^*_t$ be nodes in the connected component, and $V_t$ be old nodes sampled from $V^*_t$'s neighborhood (see subsection ``Training'' on how to sample them). Then it is possible to run Algorithm~\ref{alg-predict} on the subgraph induced by $V^*_t\cup V_t$ to obtain prediction for new nodes in $V_t^*$. Repeat this process until all new nodes are processed.

Some simplifications can be made to DepthLGP without sacrificing much performance while allowing faster convergence and a more efficient implementation of Algorithm~\ref{alg-predict}. In particular, sharing node weights across different dimensions, i.e.\ keeping $a^{(1)}_v = \ldots = a^{(s)}_v$, hurts little for most mainstream embedding algorithms (though theoretically it will reduce the expressive power of DepthLGP).
Similarly, for node2vec and DeepWalk, we can keep $\eta_1=\ldots=\eta_s$ and $\zeta_1=\ldots=\zeta_s$,
since they treat different dimensions of node embeddings equally. For LINE, however, it is better to keep $\eta_1=\ldots=\eta_{\frac{s}{2}}$ ($\zeta_1=\ldots=\zeta_{\frac{s}{2}}$)
and $\eta_{\frac{s}{2}+1}=\ldots=\eta_s$ ($\zeta_{\frac{s}{2}+1}=\ldots=\zeta_s$) separately,
because an embedding produced by LINE is the result of concatenating two sub-embeddings (for 1st- and 2nd-order proximity respectively).

\section{Conclusion and Future Work}
This chapter introduces the problem of graph representation/network embedding and the challenges lying
in the literature, i.e., high non-linearity, structure-preserving, property-preserving and sparsity. 
Given its success in handling large-scale non-linear
data in the past decade, we remark that deep neural network (i.e., deep learning)
serves as an adequate candidate to tackle these challenges and highlight the promising potential for combining graph representation/network embedding with deep neural network.
We select five representative models on deep network embedding/graph representation for discussions, i.e., structural deep network embedding (SDNE), 
deep variational network embedding (DVNE), deep recursive network embedding (DRNE),
deeply transformed high-order Laplacian Gaussian process (DepthLGP) based network embedding
and deep hyper-network embedding (DHNE).
In particular, SDNE and DRNE focus on structure-aware network embedding, which preserve the high order proximity and global structure respectively.
By extending vertex pairs to vertex tuples, DHNE targets at learning embeddings for vertexes with various types
in heterogeneous hyper-graphs and preserving the corresponding hyper structures.
DVNE focuses on the uncertainties in graph representations and DepthLGP aims to learn accurate embeddings for new nodes in dynamic networks.
Our discussions center around two aspects in graph representation/network embedding i) deep structure-oriented network embedding and ii) deep property-oriented network embedding. We hope that readers may benefit from our discussions.

The above discussions of the state-of-the-art network embedding algorithms highly demonstrate that the research field of network embedding is still young and promising. 
Selecting appropriate methods is a crucial question for tackling practical applications with network embedding.
The foundation here is the property and structure preserving issue. 
Serious information in the embedding space may lost if the important network properties cannot be retained and the network structure cannot be well preserved, which damages the analysis in the sequel. 
The off-the-shelf machine learning methods can be applied based on the property and structure preserving network embedding. 
Available side information can be fed into network embedding. 
Moreover, for some certain applications, the domain knowledge can also be introduced as advanced information. 
At last, existing network embedding methods are mostly designed for static networks,
while not surprisingly, many networks in real world applications are evolving over time. 
Therefore, novel network embedding methods to deal with the dynamic nature of evolving networks are highly desirable. 

\section*{Acknowledgment}
We thank Ke Tu (DRNE and DHNE), Daixin Wang (SDNE), Dingyuan Zhu (DVNE) and Jianxin Ma (DepthLGP) for providing us with valuable materials.

Xin Wang is the corresponding author.
This work is supported by China Postdoctoral Science Foundation No. BX201700136, 
National Natural Science Foundation of China Major Project No. U1611461 and National Program on Key Basic Research Project No. 2015CB352300.

\small
\bibliographystyle{abbrv}
\bibliography{reference}

\end{document}